
\documentclass{article}

\usepackage{microtype}
\usepackage{graphicx}
\usepackage{subfigure}
\usepackage{booktabs} 

\usepackage{hyperref}



\usepackage[accepted]{_sty/icml2025}

\usepackage{amsmath}
\usepackage{amssymb}
\usepackage{mathtools}
\usepackage{amsthm}
\usepackage{comment}
\usepackage{multirow}
\usepackage{float}

\usepackage[capitalize,noabbrev]{cleveref}

\theoremstyle{plain}
\newtheorem{theorem}{Theorem}[section]
\newtheorem{proposition}[theorem]{Proposition}
\newtheorem{lemma}[theorem]{Lemma}
\newtheorem{corollary}[theorem]{Corollary}
\theoremstyle{definition}
\newtheorem{definition}[theorem]{Definition}
\newtheorem{assumption}[theorem]{Assumption}
\theoremstyle{remark}
\newtheorem{remark}[theorem]{Remark}

\usepackage[textsize=tiny]{todonotes}

\usepackage{colortbl}
\usepackage{wrapfig} 
\usepackage{xcolor}   

\definecolor{mygray}{gray}{.9}

\icmltitlerunning{Sub-Sequential Physics-Informed Learning with State Space Model}

\begin{document}

\twocolumn[
\icmltitle{Sub-Sequential Physics-Informed Learning with State Space Model}



\icmlsetsymbol{equal}{*}

\begin{icmlauthorlist}
\icmlauthor{Chenhui Xu}{yyy}
\icmlauthor{Dancheng Liu}{yyy}
\icmlauthor{Yuting Hu}{yyy}
\icmlauthor{Jiajie Li}{yyy}
\icmlauthor{Ruiyang Qin}{yyy,zzz}
\icmlauthor{Qingxiao Zheng}{yyy}
\icmlauthor{Jinjun Xiong}{yyy}
\end{icmlauthorlist}

\icmlaffiliation{yyy}{University at Buffalo, SUNY}
\icmlaffiliation{zzz}{University of Notre Dame}

\icmlcorrespondingauthor{Chenhui Xu}{cxu26@buffalo.edu}
\icmlcorrespondingauthor{Jinjun Xiong}{jinjun@buffalo.edu}

\icmlkeywords{Machine Learning, ICML}

\vskip 0.3in
]



\printAffiliationsAndNotice{}  

\begin{abstract}

 Physics-Informed Neural Networks (PINNs) are a kind of deep-learning-based numerical solvers for partial differential equations (PDEs). Existing PINNs often suffer from failure modes of being unable to propagate patterns of initial conditions. We discover that these failure modes are caused by the simplicity bias of neural networks and the mismatch between PDE's continuity and PINN's discrete sampling. We reveal that the State Space Model (SSM) can be a continuous-discrete articulation allowing initial condition propagation, and that simplicity bias can be eliminated by aligning a sequence of moderate granularity. Accordingly, we propose PINNMamba, a novel framework that introduces sub-sequence modeling with SSM. Experimental results show that PINNMamba can reduce errors by up to 86.3\% compared with state-of-the-art architecture. 
 Our code is available at \url{https://github.com/miniHuiHui/PINNMamba}.

\end{abstract}
\vspace{-5mm}
\section{Introduction}

In the past few years, Physics-Informed Neural Networks~(PINNs)~\cite{raissi2019physics} have emerged as a novel approach for numerically solving partial differential equations~(PDEs). 
    PINN takes a neural network $u_{\theta}(x,t)$, whose parameters $\theta$ are trained with physics PDE residual loss, as the numerical solution $u(x,t)$ of the PDE, where $x$ and $t$ are spatial and temporal coordinates. 
        The core idea behind PINNs is to take advantage of the universal approximation property of neural networks~\cite{hornik1989multilayer} and automatic differentiation implemented by mainstream deep learning frameworks, such as PyTorch~\cite{paszke2019pytorch} and Tensorflow~\cite{abadi2016tensorflow}, 
            so that PINNs can achieve potentially more precise and efficient PDE solution approximation compared with traditional numerical approaches like finite element methods~\cite{reddy1993introduction}.


The mainstream PINNs predominantly employ multilayer perceptrons (MLPs) as their backbone architecture. However, despite the universal approximation capability of MLPs, they do not always guarantee the accurate learning of numerical solutions to PDEs in practice. This phenomenon is observed as the failure modes in PINNs, in which case PINN provides a completely wrong approximation~\cite{krishnapriyan2021characterizing}. As illustrated in Fig. \ref{fig1}, the failure modes often manifest as a temporal gradual distortion. This distortion arises because MLPs lack the necessary inductive bias to effectively capture the temporal dependencies of a system, ultimately hindering the accurate propagation of physical patterns informed by the initial conditions.


\begin{figure}[t!]
    \centering
    \includegraphics{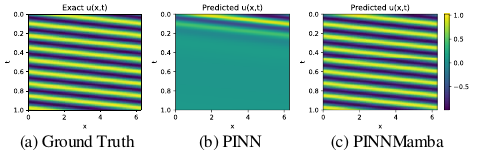}

    \caption{PINN gradually distorts on convection equation.}
    \label{fig1}
    \vspace{-5mm}

\end{figure}

To introduce such an inductive bias, several sequence-to-sequence approaches have been proposed~\cite{krishnapriyan2021characterizing,zhao2024pinnsformer,yang2022learning,gao2022earthformer}. Specifically, \citet{krishnapriyan2021characterizing} propose training a new network at each time step and recursively using its output as the initial condition for the next step. This method incurs significant computational and memory overhead while exhibiting poor generalization. Furthermore, Transformer-based approaches~\cite{zhao2024pinnsformer,yang2022learning,gao2022earthformer} are proposed to address the time-dependency issue. Yet, these methods are based on discrete sequences, making their model produce incorrect pattern mutations in some cases due to their ignorance of the basic principle that PINNs approximate continuous dynamical systems. Thus, there is still an open question:

\vspace{-2mm}
\begin{center}
\textit{How can we effectively introduce sequentiality to PINNs?}
\end{center}
\vspace{-2mm}

To answer this question, we need to understand the essential difficulties of training PINNs.
First, PINNs assume temporal continuity, whereas, during their actual training, the spatio-temporal collocation points used to construct the PDE residual loss are sampled discretely.
       We define this nature of PINN as   \textit{Continuous-Discrete Mismatch}. 
       In the absence of a well-defined continuous-discrete articulation, the real trajectory of physical system is not necessarily recovered correctly in the training process, since such \textit{Continuous-Discrete Mismatch} would block the propagation of the initial condition.

To respect the inherent \textit{Continuous-Discrete Mismatch}, we reveal that the State Space Models (SSM)~\cite{kalman1960new} can be a good continuous-discrete articulation. 
    SSMs parametrically model a discrete temporal sequence as a continuous dynamic system. An SSM's discrete form approximates the instantaneous states and rates of change of its continuous form via integrating the system's dynamics over each time interval, which more accurately responds to the trajectory of a continuous system. Meanwhile, the SSM unifies the scale of derivatives of different orders, making it easier to be optimized.    
So far, SSMs have shown their insane capacity in language~\cite{gu2023mamba} and vision~\cite{liu2024vmamba} tasks, but its potential for solving PDEs remains unexplored. We propose to construct PINNs with SSMs to unleash their excellent properties of continuous-discrete articulation.




Next, we remark that the simplicity bias~\cite{shah2020pitfalls} of neural networks is another crucial contributing factor to PINN training difficulty. The simplicity bias will lead the model to choose the pattern with the simplest hypothesis. This results in an over-smoothed solution when approximating PDEs. Because, for data-free PINNs, there might be a very simple function in the feasible domain that can make the residual loss zero. For example, for convection equation, $\bar u(x,t)=0$ leads to zero empirical loss on every collocation point except when $t=0$. While the correct pattern is hard to fight against over-smoothed patterns during training.

A major way to eliminate simplicity bias is to construct agreements over diversity predictions~\cite{teney2022evading}. Following this principle, we propose a novel sub-sequence alignment approach, which allows the diverse predictions of time-varying SSMs to form such agreements. 
Sub-sequence modeling adopts a medium sequence granularity, forming the time dependency that a small sequence fails to capture while avoiding the optimization problem along with the long sequence. 
Meanwhile, the alignment of the sub-sequence predictions ensures the global pattern propagation and the formation of an agreement that eliminates simplicity bias.

In this paper, we introduce a novel learning framework to solve physics PDE's numerically, named PINNMamba, which performs time sub-sequences modeling with the Selective SSMs (Mamba)~\cite{gu2023mamba}. PINNMamba successfully captures the temporal dependence within the PDE when training the continuous dynamical systems with discretized collocation points.
To the best of our knowledge, PINNMamba is the first data-free SSM-based model that effectively solve physics PDE.
Experiments show that PINNMamba outperforms other PINN approaches
such as PINNsFormer~\cite{zhao2024pinnsformer} and KAN~\cite{liu2024kan} on multiple hard problems, achieving a new state-of-the-art.

\textbf{Contributions.} We make the following contributions:
\vspace{-4mm}
\begin{itemize}
    \item We reveal that the mismatch between the discrete nature of the training collocation points and the continuous nature of the function approximated by the PINNs is an important factor that prevents the propagation of the initial condition pattern over time in PINNs.
    \vspace{-2mm}
    \item We also note that the simplicity bias of neural networks is a key contributing factor to the over-smoothing pattern that causes gradual distortion in PINNs.
    \vspace{-2mm}
    \item We propose PINNMamba, which eliminates the discrete-continuity mismatch with SSM and combats simplicity bias with sub-sequential modeling, resulting in state-of-the-art on several PINN benchmarks.
\end{itemize}
\vspace{-2mm}
\section{Related Works}
\label{apx:rw}

\textbf{Physics-Informed Neural Networks.}
Physics-Informed Neural Networks~\cite{raissi2019physics} are a class of deep learning models designed to solve problems governed by physical laws described in PDEs. 
    They integrate physics-based constraints directly into the training process in the loss function, allowing them to numerically solve many key physical equations, such as Navier-Stokes equations\cite{jin2021nsfnets}, Euler equations~\cite{mao2020physics}, heat equatuons~\cite{cai2021physics}. Several advanced learning schemes such as gPINN~\cite{kharazmi2019variational}, vPINN\cite{yu2022gradient}, and RoPINN\cite{wu2024ropinn}, model architectures such as QRes~\cite{bu2021quadratic}, FLS~\cite{wong2022learning}, PINNsFormer~\cite{zhao2024pinnsformer}, KAN~\cite{liu2024kan,liu2024kanw} are proposed in terms of convergence, optimization, and generalization.

\textbf{Failure Modes in PINNs.}
Despite these efforts, PINN still has some inherently intractable failure modes. 
\citet{krishnapriyan2021characterizing} identify several types of equations that are vulnerable to difficulties in solving by PINNs.  
    These equations are usually manifested by the presence of a parameter in them that makes their pattern behave as a high frequency or a complex state~\cite{pmlr-v235-cho24b}, failing to propagate the initial condition. 
        In such cases, an empirical loss constructed using a collocation point can easily fall into an over-smooth solution (e.g. $\bar u(x,t)=0$ can make the loss of all collocation points except whose $t=0$ descend to 0 for 1d-wave equations). Several methods regarding optimization~\cite{wu2024ropinn,wang20222}, sampling~\cite{gao2023failure,wu2023comprehensive}, model architecture~\cite{zhao2024pinnsformer,pmlr-v235-cho24b,pmlr-v235-nguyen24c}, transfer learning~\cite{xu2023transfer,pmlr-v235-cho24b} are proposed to mitigate such failure modes. 
            However, the above approaches do not focus on the fact that a PDE system should be modeled as a continuous dynamic, leading to difficulties in generalization over a wide range of problems.

\textbf{State Space Models.} The state space model~\cite{kalman1960new} is a mathematical representation of a physical system in terms of state variables. 
    Modern SSMs~\cite{gu2022efficiently,smith2023simplified,gu2023mamba} combine the representational power of neural networks with their own superior long-range dependency capturing and parallel computing capabilities and thus are widely used in many fields, such as language modeling~\cite{fu2023hungry,poli2023hyena,gu2023mamba,pmlr-v235-dao24a}, computer vision~\cite{pmlr-v235-zhu24f,liu2024vmamba}, and genomics~\cite{gu2023mamba,nguyen2024sequence}. Specifically, Structured SSMs~(S4)~\cite{gu2022efficiently} decomposing the structured state matrices as the sum of a low-rank
and normal terms to improve the efficiency of state-space-based deep models. Further, Selective SSMs (Mamba)~\cite{gu2023mamba} eliminates the Linear Time Invariance~\cite{sain1969invertibility} of SSMs by introducing a gating mechanism, allowing the model to selectively propagate or forget information and greatly enhancing the model performance. In physics, SSMs are used in conjunction with Neural Operator to form a data-driven solution to PDEs~\cite{zheng2024aliasfree,hu2024state}. 
However, these methods are data-driven which lack generalization ability in some scenarios where real data is not available. Unlike these methods, our approach, PINNMamba is fully physics-driven, relying only on residuals constructed using PDEs without any training data.

\section{Preliminary}

\textbf{Physics-Informed Neural Networks.}
The PDE systems that are defined on spatio-temporal set $\Omega \times [0,T] \subseteq  \mathbb R^{d+1}$ and described by equation constraints, boundary conditions, and initial conditions can be formulated as:
\vspace{-2mm}
\begin{equation}
    \mathcal F(u(x,t)) = 0,\forall (x,t)\in\Omega\times[0,T];
\end{equation}
\begin{equation}
    \mathcal I(u(x,t)) = 0,\forall (x,t)\in\Omega\times\{0\};
\end{equation}
\begin{equation}
    \mathcal B(u(x,t)) = 0,\forall (x,t)\in\partial\Omega\times [0,T],
\end{equation}
where $u:\mathbb R^{d+1}\rightarrow \mathbb R^m$ is the solution of the PDE, $x\in\Omega$ is the spatial coordinate, $\partial\Omega$ is the boundary of $\Omega$, $t \in [0,T]$ is the temporal coordinate and $T$ is the time horizon. The
$\mathcal F,\mathcal I, \mathcal B$ denote the operators defined by PDE equations, initial conditions, and boundary conditions respectively.

A physics-driven PINN first builds a finite collocation point set $\chi \subset \Omega\times[0,T]$, and its spatio (temporal) boundary $\partial\chi \subset \partial\Omega\times[0,T]$ ($\chi_0 \subset \Omega\times\{0\}$), then employs a neural network $u_\theta(x,t)$ which is parameterized by $\theta$ to approximate $u(x,t)$ by optimizing the residual loss as defined in Eq.~\ref{equ:loss}:
\vspace{-2mm}
\begin{equation}
    \mathcal L_{\mathcal F}(u_\theta)= \frac{1}{|\chi|}\sum_{(x_i,t_i)\in \chi}\|\mathcal F(u_\theta(x_i,t_i)\|^2;
    \label{equ:lossequ}
\end{equation}
\begin{equation}
    \mathcal L_{\mathcal I}(u_\theta)= \frac{1}{|\chi_0|}\sum_{(x_i,t_i)\in \chi_0}\|\mathcal I(u_\theta(x_i,t_i)\|^2;
    \label{equ:lossinit}
\end{equation}
\begin{equation}
    \mathcal L_{\mathcal B}(u_\theta)= \frac{1}{|\partial\chi|}\sum_{(x_i,t_i)\in \partial\chi}\|\mathcal B(u_\theta(x_i,t_i)\|^2;
    \label{equ:lossbound}
\end{equation}
\begin{equation}
    \mathcal L(u_\theta)=\lambda_{\mathcal F}\mathcal L_{\mathcal F}(u_\theta)+\lambda_{\mathcal I}\mathcal L_{\mathcal I}(u_\theta)+\lambda_{\mathcal B}\mathcal L_{\mathcal B}(u_\theta),
    \label{equ:loss}
\end{equation}
where $\lambda_\mathcal F$,$\lambda_\mathcal I$,$\lambda_\mathcal B$ are the weights for loss that are adjustable by auto-balancing or hyperparameters. $\|\cdot\|$ denotes $l^2$-norm.

\textbf{State Space Models.} An SSM describes and analyzes a continuous dynamic system. It is typically described by:
\begin{align}    \label{equ:hiddenssm}
   \mathbf {\dot h(t)} &= A\mathbf h(t) + B\mathbf x(t),\\
     \mathbf u(t) &= C\mathbf h(t) + D\mathbf x(t),
     \label{equ:outputssm}
\end{align}

where $\mathbf h(t)$ is hidden state of time $t$, $\mathbf {\dot  h}(t)$ is the derivative of $\mathbf h(t)$. $\mathbf x(t)$ is the input state of time $t$, $\mathbf u(t)$ is the output state, and $A,B,C,D$ are state transition matrices. 
    
    In real-world applications, we can only sample in discrete time for building a deep SSM model. 
        We usually omit the term $D\mathbf x(t)$ in deep SSM models because it can be easily implemented by residual connection \cite{he2016deep}. So we create a discrete time counterpart:
\begin{align}
    \mathbf h_k &= \bar A \mathbf h_{k-1}+\bar B \mathbf x_k,
\\
    \mathbf u_k &= C \mathbf h_k,
\end{align}
with discretization rules such as zero-order hold (ZOH):
\begin{align}\label{equ:disc1}
    \bar A &= \exp{(\mathrm{\Delta}A)},\\
    \bar B &= (\mathrm{\Delta}A)^{-1}( \exp{(\mathrm{\Delta}A)}-I)\cdot \mathrm{\Delta}B,
    \label{equ:disc2}
\end{align}
where $\bar A$ and $\bar B$ is discrete time state transfer and input matrix, and $\mathrm{\Delta}$ is a step size parameter. By parameterizing $A$ using HiPPO  matrix~\cite{gu2020hippo}, and parameterizing $(\Delta,B,C)$ with input-dependency, a time-varying Selective SSM can be constructed~\cite{gu2023mamba}. Such a Selective SSM can capture the long-time continual dependencies in dynamic systems. We will argue that this makes SSM a good continuous-discrete articulation for modeling PINN.
\begin{figure}[t!]
    \centering
    \includegraphics{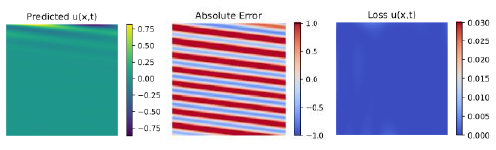}
    \vspace{-3mm}
    \caption{Failure mode of PINN on convection equation, the over-smooth solution brings the losses down to 0 almost everywhere.}
    \label{fig2}

\end{figure}

\section{Why PINNs Present Failure Modes?}
\label{sec:fail}


A counterintuitive fact of PINNs is that the failure modes are not devoid of optimizing their residual loss to a very low degree.
As shown in Fig.~\ref{fig2}, for the convection equation, the converged PINN almost completely crashes in the domain, but its loss maintains a near-zero degree at almost every collocation point. This is the result of the combined effects of the simplicity bias~\cite{shah2020pitfalls,pezeshki2021gradient} of neural networks and the \textit{Continuous-Discrete Mismatch} of PINNs, as shown in Fig.~\ref{fig5}. 
    The simplicity bias is the phenomenon that the model tends to choose the one with the simplest hypothesis among all feasible solutions, which we demonstrate in Fig.~\ref{fig5}~(b).
        \textit{Continuous-Discrete Mismatch} refers to the inconsistency between the continuity of the PDE and the discretization of PINN's training process.
As shown in Eq.~\ref{equ:lossequ} - \ref{equ:lossbound}, to construct the empirical loss for the PINN training process, we need to determine a discrete and finite set of collocation points on $\Omega\times[0,T]$. 
This is usually done with a grid or uniform sampling. But a PDE system is usually continuous and its solutions should be regular enough to satisfy the differential operator $\mathcal F$, $\mathcal B$, and $\mathcal I$.

\begin{figure}[t!]
    \centering
    \includegraphics{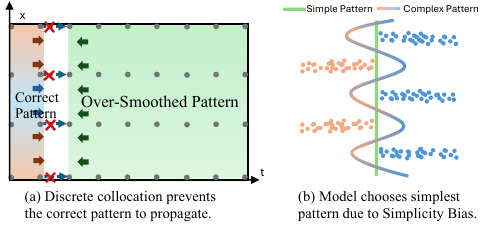}    \vspace{-1mm}
    \caption{The correct Pattern determined by the initial conditions faces two resistances in propagation: (a) the difficulty of propagating information directly through the gradient among discrete collocation points, and (b) the need to fight against over-smoothed solutions with near-zero loss caused by simplicity bias.}
    \label{fig5}
    \vspace{-1mm}
\end{figure}

\textbf{Continuous-Discrete Mismatch.} \textit{Continuous-Discrete Mismatch} will cause correct local patterns hard to propagate over the global domain.
Because the loss on discrete collocation points does not necessarily respond to the correct pattern on the continuous domain, instead, only responds to its small neighborhood. 
To show such \textit{Continuous-Discrete Mismatch}, we first present the following theorem:

\begin{theorem}\label{thm:continuous-discrete}
    Let $\chi^* = \{(x^*_1,t^*_1),\dots,(x^*_N,t^*_N)\}\subset \Omega\times[0,T]$. Then for differential operator $\mathcal M$ there exist infinitely many functions
$u_\theta : \Omega \to \mathbb{R}^m$ parametrized by $\theta$ , s.t.
$$ \mathcal{M}(u_\theta(x^*_i,t^*_i)) = 0 \quad \text{for } i=1,\dots,N,$$ $$ 
   \mathcal{M}(u_\theta(x,t)) \neq 0
   \quad \text{for a.e. } x \in \Omega\times[0,T] \setminus \chi^*.$$
\end{theorem}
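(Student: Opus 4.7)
My plan is to construct $u_\theta$ as a seed function plus $N$ localized corrections, with each correction tuned to kill the residual at exactly one collocation point, leaving $\theta$ as a free parameter. First I pick any $\psi_0\in C^{\infty}(\Omega\times[0,T];\mathbb{R}^m)$ such that $\mathcal{M}(\psi_0)$ is a non-identically-zero real-analytic function---a generic polynomial or sinusoid will do. Since a real-analytic function that is not identically zero vanishes on a Lebesgue null set, $\mathcal{M}(\psi_0)\neq 0$ almost everywhere automatically, so the ``nonzero a.e.''\ half of the conclusion comes for free from the seed.

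Next, I choose $N$ smooth patch functions $\psi_1,\dots,\psi_N$ with pairwise disjoint compact supports $U_j$ in small neighborhoods of the collocation points, normalized so that $\mathcal{M}(\psi_j)(x^*_i,t^*_i)=\delta_{ij}$. This normalization is achievable because $\mathcal{M}$ is a finite-order differential operator: the value $\mathcal{M}(\psi_j)(x^*_j,t^*_j)$ depends only on finitely many Taylor coefficients of $\psi_j$ at that point, so taking $\psi_j=\eta_j p_j$ with a bump cutoff $\eta_j$ and a polynomial jet $p_j$ tuned at $(x^*_j,t^*_j)$ realizes the Kronecker condition. Now define
$$u_\theta(x,t) := \theta\,\psi_0(x,t) - \theta\sum_{j=1}^{N}\mathcal{M}(\psi_0)(x^*_j,t^*_j)\,\psi_j(x,t).$$
By linearity of $\mathcal{M}$, a direct computation yields $\mathcal{M}(u_\theta)(x^*_i,t^*_i)=0$ for every $i$. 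Outside $\bigcup_j U_j$ the patches vanish identically, so the residual reduces to $\theta\,\mathcal{M}(\psi_0)$, which is real-analytic and nonzero a.e. for $\theta\neq 0$. Inside each $U_j$ the residual is a smooth combination whose zero set has Lebesgue measure zero for all but at most countably many $\theta$, by a generic-analyticity argument; letting $\theta$ range over the uncountable complement delivers the promised infinite family.

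The main obstacle I anticipate is not the core construction but its extension beyond the linear setting: if $\mathcal{M}$ is nonlinear, the superposition identity $\mathcal{M}(u_\theta)=\theta\,\mathcal{M}(\psi_0)+\sum_j c_j\,\mathcal{M}(\psi_j)$ fails. In that case I would linearize $\mathcal{M}$ about the seed $\theta\psi_0$, invoke the implicit function theorem to solve for correction coefficients $c_j(\theta)$ depending smoothly on $\theta$ in a neighborhood of zero, and use compactness of the finite patches together with smallness of $\theta$ to control the nonlinear remainder. The downstream analytic non-vanishing step carries over unchanged, so the infinite family still follows after reparametrization.
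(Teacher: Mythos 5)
Your construction is essentially the paper's: a base function whose residual is nonzero almost everywhere, plus $N$ disjointly supported local corrections normalized so that each one cancels the residual at exactly one collocation point, with infinitude coming from a free parameter. Your route to the base function is in fact cleaner --- the paper builds $v$ as a convergent sum of bumps over a countable dense set and argues the zero set is nowhere dense, whereas you get ``nonzero a.e.''\ for free from real-analyticity of $\mathcal{M}(\psi_0)$. Your Kronecker normalization $\mathcal{M}(\psi_j)(x^*_i,t^*_i)=\delta_{ij}$ is a tidier packaging of the paper's coefficients $\alpha_i=-\mathcal{M}(v)(x^*_i,t^*_i)/\mathcal{M}(w_i)(x^*_i,t^*_i)$, and you are the only one to flag that the whole superposition argument presupposes $\mathcal{M}$ linear (relevant, since several of the paper's target PDEs are nonlinear).

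One step does not work as written: the claim that inside each patch $U_j$ the zero set of the residual is null ``for all but at most countably many $\theta$.'' Since $\theta$ multiplies the entire function, by linearity $\mathcal{M}(u_\theta)=\theta\bigl(\mathcal{M}(\psi_0)-\sum_j \mathcal{M}(\psi_0)(x^*_j,t^*_j)\,\mathcal{M}(\psi_j)\bigr)$, so for every $\theta\neq 0$ the zero set inside $U_j$ is the \emph{same} fixed set $\{g_j=0\}$ with $g_j=\mathcal{M}(\psi_0)-\mathcal{M}(\psi_0)(x^*_j,t^*_j)\mathcal{M}(\psi_j)$; varying $\theta$ buys you nothing, and $g_j$ is only smooth (not analytic) in the cutoff annulus, so its zero set could a priori have positive measure. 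You need a genuinely free degree of freedom in the patches themselves --- e.g.\ perturb $\psi_j\mapsto\eta_j(p_j+s\,q_j)$ with $\mathcal{M}(q_j)(x^*_j,t^*_j)=0$ and argue genericity in $s$, or shrink the supports and control the exceptional set directly. To be fair, the paper's own proof stumbles at exactly the same spot: it asserts that the corrections are ``confined to the measure-zero set $\bigcup_i B_{\epsilon_i}(x^*_i,t^*_i)$,'' but those balls have positive Lebesgue measure, so the a.e.\ non-vanishing inside them is likewise not established there. Your proof is therefore on par with the paper's, with one locally flawed justification that is repairable.
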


\begin{figure*}[t!]
    \centering
    \includegraphics[width=\textwidth]{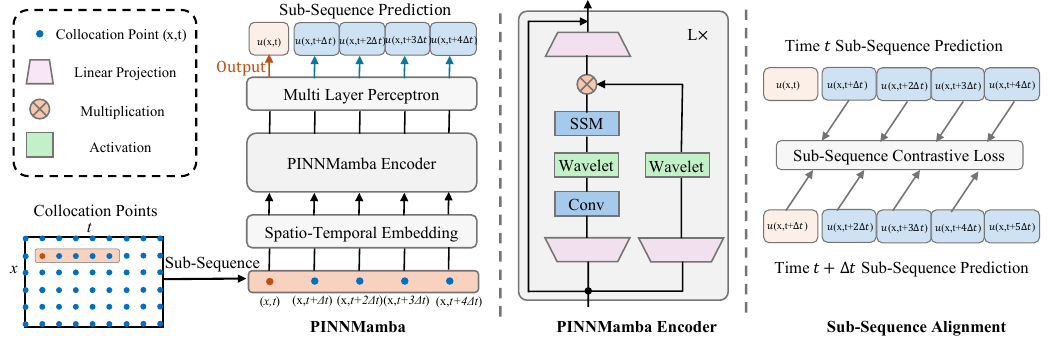}
    \vspace{-6mm}
    \caption{PINNMamba Overview. PINNMamba takes the sub-sequence as input which is a composite of several consecutive collocation points on the time axis. For each sub-sequence, the prediction of the first collocation point is taken as the output of PINNMamba, while the others are used to align the prediction of different sub-sequences, that can propagate information among time coordinates.}
    \label{fig:main}
    \vspace{-4mm}
\end{figure*}

 By Theorem~\ref{thm:continuous-discrete}, enforcing the PDE only at a finite set of points does not guarantee a globally correct solution. This can be performed by simply constructing a Bump function in a small neighborhood of points in $\chi^*$ so that it satisfies $\mathcal{M}(u_\theta(x^*,t^*)) = 0$ for $(x^*,t^*) \in \chi^*$. This means that the information of the equation determined by the initial conditional differential operator $\mathcal I$ may act only on a small neighborhood of collocation points with $t = 0$. The other collocation points in the $\Omega\times(0,T]$, on the other hand, might fall into a local optimum that can make $\mathcal L_{\mathcal F}(u_\theta)$ defined by Eq.~\ref{equ:lossequ} to near 0. 
 Because the function $u_{\theta}$ determined by $\mathcal F$ and $\mathcal I$ together on the collocation points at $t = 0$ may not be generalized outside its small neighborhood. The detailed proof
of Theorem~\ref{thm:continuous-discrete} can be found in Appendix~\ref{apx:proof3_1}.
 
\textbf{Simplicity Bias.} Meanwhile, the simplicity bias of neural networks will make the PINNs always tend to choose the simplest solution in optimizing $\mathcal L_{\mathcal F}(u_\theta)$. This implies that PINN will easily fall into an over-smoothed solution. For example, as shown in Fig.~\ref{fig2}, the PINN's prediction is 0 in most regions. The loss of this over-smoothed feasible solution is almost identical to that of the true solution, and the existence of an insurmountable ridge between the two loss basins results in a PINN that is extremely susceptible to falling into local optimums. As in Fig~\ref{fig5}, the over-smoothed pattern yields an advantage against the correct pattern.

Under the effect of difficulty in passing locally correct patterns to the global due to \textit{Continuous-Discrete Mismatch} and over-smoothing due to simplicity bias, PINNs present failure modes. Therefore, to address such failure modes, the key points in designing the PINN models lie in: (1) a mechanism for information propagation in continuous time and (2) a mechanism to eliminate the simplicity bias of models.









\vspace{-1mm}

\section{Combating Failure Mode with State-Space Model and Sub-sequential Alignment}
\label{sec:ssmsub}

    To address the problems in Section~\ref{sec:fail}, we propose (1) a discrete state-space-based encoder that models the sequences of individual collocation points in continuous dynamics, to match with \textit{Continuous-Discrete Mismatch}, and propagates the information from the initial condition to subsequent times (Section~\ref{sec:ssm}).  and (2) a sub-sequence contrastive alignment mechanism that aligns different outputs of the same collocation point in different sub-sequences, to form an agreement that eliminates simplicity bias (Section~\ref{sec:subseq}).
    
\vspace{-2mm}

\subsection{Continuous Time Propagation of Initial Condition Information with State Space Model}
\vspace{-1mm}
\label{sec:ssm}
As we discussed in Section~\ref{sec:fail}, the \textit{Continuous-Discrete Mismatch} of PINNs raises the intrinsic difficulty of modeling, since the time dependency in a dynamic PDE system is not captured spontaneously by discrete sampling. 
    We argue that such a dynamic time dependency can be modeled by SSM. 
To this end, we first consider the PDE as a spatially infinite-dimensional ODE to simplify the problem. We view the solution $u_\theta(x,t)$ in a function space that, if we let:
\begin{equation}
    U(t) := u_\theta(\cdot,t),
\end{equation}
be a function $x \to u_\theta(x,t)$, by $M$-point spatial sampling:
\begin{equation}
    U_i(t) := u(x_i,t),
\end{equation}
\begin{equation}
    \mathbf {u}(t) = \left[U_1(t),U_2(t),\cdots,U_M(t) \right]^\top .
    \label{}
\end{equation}

\textbf{Sequential Modeling Continuity with SSM.}
In continuous time, we now model the function $\mathbf {u}(t)$ to the dynamic system described by SSM as in Eq.~\ref{equ:hiddenssm} and~\ref{equ:outputssm}. Here we let $\mathbf x(t) = \text{Embed}(x,t)$, where $\text{Embed}(\cdot)$ is the Spatio-Temporal Embedding in Fig~\ref{fig:main}. After temporal discretization $\mathbf u_k=\mathbf u(k\Delta t),\mathbf h_k=\mathbf h(k\Delta t)$ and $\mathbf x_k=\mathbf x(k\Delta t)$, we get: 
\begin{equation}
    \mathbf u_k=C\bar A^k \mathbf h_0 + C\sum_{i=0}^k\bar A^{k-i}\bar B\mathbf x_i.
    \label{equ:ssmu}
\end{equation}
Reversibly, by the inverse of the discretization rule defined by Eq.~\ref{equ:disc1},~\ref{equ:disc2}, we can restore this temporal dependency to continuous time. This kind of restoration can help achieve PINN's generalization to any moment in $[0, T]$. 

\textbf{Pattern Propagation by Joint Optimization.}
Combine Eq.~\ref{equ:lossequ} with~\ref{equ:ssmu}, in a sequence start with $t=0$, the sum of loss of collocation points at time $k\Delta t$, would be: 
\begin{align}
  &\sum_{i=1}^M \mathcal L_\mathcal F(u(x_i,k\Delta t)) =  \frac{1}{M}\mathcal \|\mathcal F( \mathbf 1_M\cdot \mathbf{u}_k)\Arrowvert^2\nonumber\\&=\frac{1}{M}\|\mathcal F\left(\mathbf1_M\cdot(C\bar A^k \mathbf h_0 + C\sum_{i=0}^k\bar A^{k-i}\bar B\mathbf x_i)\right)\Arrowvert^2,
  \label{equ:timeloss}
\end{align}
where $1_M=[1,1,\cdots,1] \in \mathbb R^M$. In Eq.~\ref{equ:timeloss}, we notice that the $\mathbf h_0$ should satisfy both the initial condition and the equation by jointly optimizing the losses:
\begin{align}\label{equ:loss0equ}
   \mathcal L_\mathcal F(\mathbf u_0) =  \frac{1}{M}\|\mathcal F\left(\mathbf1_M\cdot(C \mathbf h_0 )\right)\Arrowvert^2;\\
   \mathcal L_\mathcal I(\mathbf u_0) =  \frac{1}{M}\|\mathcal I\left(\mathbf1_M\cdot(C \mathbf h_0 )\right)\Arrowvert^2 .
   \label{equ:loss0init}
\end{align}
Thereby, for each collocation point, the numerical value of its solution should be jointly optimized by Eq.~\ref{equ:timeloss},~\ref{equ:loss0equ}, and~\ref{equ:loss0init}, thus receiving the pattern defined by the initial conditions.

\textbf{Uniformed Derivatives Scale.} Another benefit that can be got from SSM is, by parameterizing differential state matrix $A$ in Eq.\ref{equ:hiddenssm} with HiPPO matrix~\cite{gu2020hippo} which contains the derivative information,  we can align the derivatives of the system with respect to time on a uniform scale. This uniform scale will help to reduce the problem of ruggedness on the loss landscape due to gradient vanishing or exploding.

\textbf{Time-Varying SSM.} In practice, we use the time-varying Selective SSM~\cite{gu2023mamba}, instead of the function defined by Eq.~\ref{equ:ssmu} being the SSM on a linear time-invariant system. The time-varying SSM has two advantages, one is that such input-dependent models typically have stronger representational capabilities~\cite{xu2024infinite}, while the other is that it will make diverse predictions that help to eliminate simplicity bias in the model, as we will discuss in section~\ref{sec:subseq}. This time-variance will make $(\bar A,\bar B, C)$ time-dependent, and therefore, Eq.~\ref{equ:ssmu} and \ref{equ:timeloss} need minor adjustments. These adjustments won't impact the initial condition propagation, and we will discuss them in Appendix~\ref{apx:LTI}.

\subsection{Eliminating Simplicity Bias of Models with Sub-Sequence Contrastive Alignment}
\label{sec:subseq}

\begin{figure}[t!]
    \centering
    \includegraphics{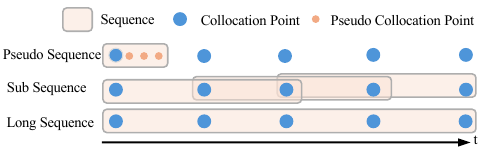}
    \vspace{-5mm}
    \caption{Comparison of Sequence Granularity}
    \label{fig4}
    \vspace{-6mm}
\end{figure}

Although SSM can make the information about the initial conditions propagate in time coordinates, it still cannot mitigate the simplicity bias of neural networks. 
    The model is still prone to falling into an over-smoothed local optimum. 
        There are two key points to address this over-smoothness caused by simplicity bias: (1) appropriate sequence granularity to guarantee a smooth optimization process. (2) Mitigating the effect of simplicity bias through the diversity of model prediction paradigms~\cite{pagliardiniagree}. 
        
        \textbf{Sequence Granularity.} A proper sequence granularity ensures smooth propagation of the initial conditions while making the model easier to optimize. As shown in Fig.~\ref{fig4}, there are three ways to define sequence, which are pseudo sequence~\cite{zhao2024pinnsformer}, long sequence~\cite{nguyen2024sequence}, and the proposed sub-sequence.
        We propose to use a sub-sequence with medium granularity overlapping. The sub-sequential modeling can avoid: (1) the difficulty of crossing the loss barrier that makes the model trapping in the over-smooth local optimum, which is caused by the huge inertia of long sequence; (2) the difficulty of broadcasting information globally on the time coordinate, that caused by construct on small neighborhoods of a collocation point in pseudo sequence. Sub-sequence takes only the first output in the sequence as the output value of the current collocation point. Its successors' values will pass information crossing the time coordinate through subsequences alignment and form diverse predictions to eliminate simplicity bias.

\textbf{Contrastive Alignment for Information Propagation.} As shown in Fig.~\ref{fig4}, we construct a sub-sequence for each collocation point together with its finite successors, which form overlapping collocation points. By aligning the predictions of these collocation points with a contrastive loss, each collocation point becomes a soft relay of the pattern. Thus, it forms the propagation of patterns in the whole time domain.

\textbf{Eliminating the Simplicity Bias.} Previous work~\cite{teney2022evading,pagliardiniagree} has pointed out that the agreement obtained from diverse predictions is the key to eliminating the effects of simplicity bias. We argue that this agreement from diverse predictions is naturally obtained in the sub-sequence alignment. This is because the fact that,
    since the SSM we constructed in section~\ref{sec:ssm} is time-varying and a collocation point will be at different time coordinates in different sub-sequences, the predictions for this collocation point are naturally diverse. And we force these diverse predictions to arrive at a consensus by contrastive alignment.


\section{PINNMamba}

In conjunction with the high-level ideas described in Section~\ref{sec:ssmsub}, in this section, we present PINNMamba, a novel physics-informed learning framework that effectively combats the failure modes in the PINNs.

\begin{figure*}[t!]
    \centering
    \includegraphics[width=\textwidth]{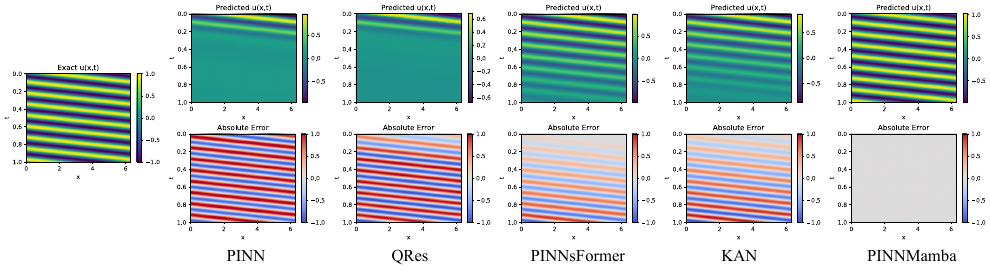}
    \vspace{-3mm}
    \caption{The ground truth solution, prediction (top), and absolute error (bottom) on convection equations.}
    \label{fig:conv}

\end{figure*}

\textbf{Sub-Sequential I/O.} As shown in Fig.~\ref{fig:main}, PINNMamba first samples the grid of collocation points over the entire spatio-temporal domain bounded by the PDE. We assume that the grid picks $M$ spatial coordinates and $N$ temporal coordinates, and denote the temporal sampling interval as $\Delta t = T/(N-1)$. For a collocation point $(x,t)$, we construct a sequence $X(x,t)$ with its $k-1$ temporal successors:  
\begin{equation}
    X(x,t) = \{(x,t),(x,t+\Delta t),\cdots,(x,t+(k-1)\Delta t)\}.
\end{equation}
PINNMamba takes such $M\times N$ sequences as the input of models. 
    For each sequence $X(x,t)$, PINNMamba computes a sub-sequence prediction $\{\bar u_\theta^t (x,t),\bar u_\theta^t (x,t+\Delta t),\cdots,\bar u_\theta^t (x,t+(k-1)\Delta t)\}$ corresponding to every collocation point in the sequence, where $\bar u_\theta^t (x,t+i\Delta t)$ denote the tentative prediction of collocation point $(x,t+i\Delta t)$ in a sequence start with time $t$. The $\bar u_\theta^t (x,t)$ will be taken as the output of collocation point $(x,t)$ and the rest of the sequence will be used to construct the sub-sequence contrastive alignment loss we will discuss later in Section~\ref{sec:subseq}. The residual losses of the model w.r.t the sub-sequence will be:
    
\vspace{-5mm}

    \small{
    \begin{equation}
    \mathcal L_{\mathcal F}^\text{seq}(u_\theta)= \frac{1}{k|\chi|}\sum_{(x_i,t_i)\in \chi}\sum_{j=0}^{k-1}\|\mathcal F(u_\theta^{t_i}(x_i,t_i+j\Delta t)\|^2;
    \label{equ:lossequseq}
\end{equation}
\begin{equation}
    \mathcal L_{\mathcal I}^\text{seq}(u_\theta)= \frac{1}{k|\chi_0|}\sum_{(x_i,t_i)\in \chi_0}\sum_{j=0}^{k-1}\|\mathcal I(u_\theta^{t_i}(x_i,t_i+j\Delta t)\|^2;
    \label{equ:lossinitseq}
\end{equation}
\begin{equation}
    \mathcal L_{\mathcal B}^\text{seq}(u_\theta)= \frac{1}{k|\partial\chi|}\sum_{(x_i,t_i)\in \partial\chi}\sum_{j=0}^{k-1}\|\mathcal B(u_\theta^{t_i}(x_i,t_i+j\Delta t)\|^2.
    \label{equ:lossboundseq}
\end{equation}
}
\normalsize
\vspace{-5mm}

\textbf{Model Architecture.} As shown in Fig.~\ref{fig:main}, PINNMamba employs an encoder-only architecture, which encodes fixed-size input sub-sequence into a sub-sequence prediction with the same length. First, for each token in the sequence, an MLP-based Spatio-Temporal Embedding layer first embeds the $(x,t)$ coordinates into high-dimensional representation. The embeddings will be sent to a Mamba-based encoder, which consists of several PINNMamba blocks. 

The PINNMamba block employed here consists of two branches: (1) the first is a stack of a linear projection layer, a 1d-convolution layer, a Wavelet activation~\cite{zhao2024pinnsformer}, and an SSM layer with parallel scan~\cite{gu2023mamba}; (2) the second is a stake of a linear projection layer and a Wavelet activation. The two branches are then connected with an element-wise multiplication, followed by another linear projection and residual connection. With input $X^l$,
the PINNMamba block can be formulated as: 
\begin{equation}
    X_1^l = \text{SSM}(\sigma(\text{Conv}(W_aX^l)));
\end{equation}
\vspace{-5mm}
\begin{equation}
    X_2^l = \sigma(W_bX^l);
\end{equation}
\vspace{-5mm}
\begin{equation}
    X^{l+1} = X^l+W_c(X_1^l\otimes X_2^l),
\end{equation}
where $\sigma(x)=\omega_1\sin(x)+\omega_2\cos(x)$ is Wavelet activation function~\cite{zhao2024pinnsformer}, in which $\omega_1,\omega_2$ are learnable. $\otimes$ denotes an element-wise multiplication. 

\textbf{Sub-Sequence Contrastive Alignment.} PINNMamba predicts the same collocation multiple times in different subsequences. For example, the collocation point $(x,t+k\Delta t)$ appears on sequences from $X(x,t+\Delta t)$ to $X(x,t+k\Delta t)$. We align the predictions on these subsequences to make the information defined by the initial conditions propagate over time. To do this, for each subsequence, we design a contrastive loss with the last subsequence for alignment:
\begin{align}
        \mathcal L_\text{alig}(u_\theta)= \frac{1}{(k-1)|\chi|}&\sum_{(x_i,t_i)\in \chi} \sum_{j=1}^{k-1} \Big[u_\theta^{t_i}(x_i,t_i+j\Delta t)\nonumber\\&-u_\theta^{t_i+\Delta t}(x_i,t_i+j\Delta t)\Big]^2.
\end{align}
\normalsize

Thus, the empirical loss for PINNMamba is defined as:
\small
\begin{equation}
     \mathcal L(u_\theta)=\lambda_{\mathcal F}\mathcal L_{\mathcal F}^\text{seq}(u_\theta)+\lambda_{\mathcal I}\mathcal L_{\mathcal I}^\text{seq}(u_\theta)+\lambda_{\mathcal B}\mathcal L_{\mathcal B}^\text{seq}(u_\theta)+\lambda_\text{alig}\mathcal L_\text{alig}(u_\theta).
\end{equation}
\vspace{-8mm}

\normalsize

\section{Experiments}
\textbf{Setup.} We evaluate the performance of PINNMamba on three standard PDE benchmarks: convection, wave, and reaction equations, all of which are identified as being affected by failure modes~\cite{krishnapriyan2021characterizing,zhao2024pinnsformer}. The details of those PDEs can be found in Appendix~\ref{apx:setup}.
    We compare PINNMamba with four baseline models, vanilla PINN~\cite{raissi2019physics}, QRes~\cite{bu2021quadratic}, PINNsFormer~\cite{zhao2024pinnsformer}, and KAN~\cite{liu2024kan} .
For fair comparison, we sample 101$\times$101 collocation points with uniformly grid sampling, following previous work~\cite{zhao2024pinnsformer,wu2024ropinn}. We also evaluate on PINNacle Benchmark~\cite{hao2023pinnacle} and Navier–Stokes equation~\cite{raissi2019physics}.

\begin{table*}

  \caption{Results for solving convection, reaction, and wave equations.}
  \label{sample-table}
  
  \centering
    \small
  \begin{tabular}{l|c|ccc|ccc|ccc}

    \toprule 
  & & \multicolumn{3}{c}{Convection }&\multicolumn{3}{c}{Reaction}&\multicolumn{3}{c}{Wave}\\
    \cmidrule(lr){3-5}\cmidrule(lr){6-8}\cmidrule(lr){9-11}
   Model & \#Params &Loss & rMAE & rRMSE & Loss & rMAE & rRMSE& Loss & rMAE & rRMSE
 \\   \midrule
    PINN&527361& 0.0239 & 0.8514 & 0.8989& 0.1991 & 0.9803 & 0.9785& 0.0320 & 0.4101 & 0.4141\\
    QRes & 396545& 0.0798 & 0.9035 & 0.9245& 0.1991 & 0.9826 & 0.9830& 0.0987 & 0.5349 & 0.5265\\
    PINNsFormer &453561 & 0.0068 & 0.4527 & 0.5217& 3e-6& 0.0146 & 0.0296 & 0.0216 & 0.3559 & 0.3632\\
     KAN&891& 0.0250 & 0.6049 & 0.6587& 7e-6 & 0.0166 & 0.0343& 0.0067 & 0.1433 & 0.1458\\
   \rowcolor{mygray}   PINNMamba  & 285763&0.0001 & \textbf{0.0188} & \textbf{0.0201}&1e-6&\textbf{0.0094}&\textbf{0.0217}& 0.0002 & \textbf{0.0197} & \textbf{0.0199} \\

    \bottomrule
  \end{tabular}
  \normalsize
  \label{tab:diff}

\end{table*}

\begin{figure*}[t!]
    \centering
    \includegraphics[width=\textwidth]{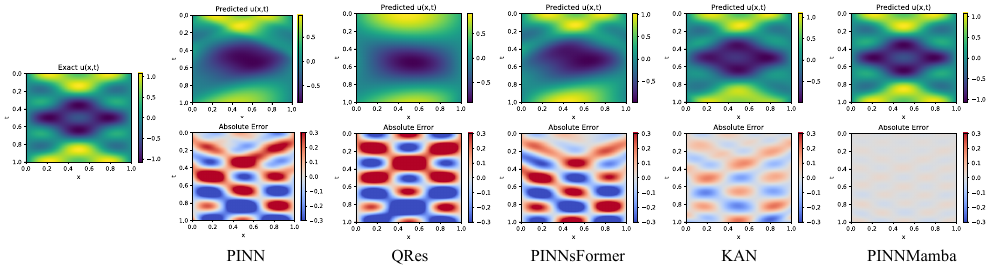}
    \vspace{-5mm}
    \caption{The ground truth solution, prediction (top), and absolute error (bottom) on wave equations.}
    \label{fig:wave}
    \vspace{-2mm}
\end{figure*}

\textbf{Training Details.} We train PINNMamba and all the baseline models 1000 epochs with L-BFGS optimizer~\cite{liu1989limited}.
We set the sub-sequence length to 7 for PINNMamba, and keep the original pseudo-sequence setup for PINNsFormers. The weights of loss terms $[\lambda_\mathcal F,\lambda_\mathcal I,\lambda_\mathcal B]$ are set to $[1,1,10]$ for all three equations, as we find that strengthening the boundary conditions can lead to better convergence. $\lambda_\text{alig}$ is set to 1000 for convection and reaction equations, and auto-adapted by $\lambda_\mathcal F$ for wave equation.
All experiments are implemented in PyTorch 2.1.1 and trained on an NVIDIA H100 GPU.  More training details are in Appendix~\ref{apx:hyperparam}. Our code and weights are available at \url{https://github.com/miniHuiHui/PINNMamba}.

\textbf{Metrics.} To evaluate the performance of the models, we take relative Mean Absolute Error (rMAE, a.k.a  $\ell_1$ relative error) and relative Root Mean Square Error (rRMSE, a.k.a $\ell_2$ relative error) following common practive~\cite{zhao2024pinnsformer,wu2024ropinn}. The metrics are formulated as:
\begin{align}
\text { rMAE }(\hat u)&=\frac{\sum_{n=1}^N\left|\hat{u}\left(x_n, t_n\right)-u\left(x_n, t_n\right)\right|}{\sum_{n=1}^{N}\left|u\left(x_n, t_n\right)\right|}, \\
\text { rRMSE }(\hat u)&=\sqrt{\frac{\sum_{n=1}^N\left|\hat{u}\left(x_n, t_n\right)-u\left(x_n, t_n\right)\right|^2}{\sum_{n=1}^N\left|u\left(x_n, t_n\right)\right|^2}},
\end{align}
where N is the number of test points, $u(x,t)$ is the ground truth solution, and $\hat u(x,t)$ is the model's prediction.

\vspace{-2mm}

\subsection{Main Results}
\vspace{-1mm}
We present the rMAE and rRMSE for approximating convection, reaction and wave equation's solution in Table~\ref{tab:diff}. Our model consistently outperforms other model architectures, achieving new state-of-the-art.
Notably, as shown in Fig.~\ref{fig:conv}, for the convection equation, PINNMamba allows sufficient propagation of information about the initial conditions, whereas on all the other models there is a varying degree of distortion in the time coordinates.
    As shown in Fig.~\ref{fig:reac}, PINNMamba can further optimize at the boundary, resulting in a lower error than KAN and PINNsFormer for reaction equations. For problems as intrinsically difficult to optimize as the wave, as in Fig.~\ref{fig:wave}, PINNMamba effectively combats simplicity bias and aligns the scales of multi-order differentiation, and thus achieves significantly higher accuracy. This illustrates that PINNMamba can be effective against PINN's failure modes. It's also worth noting that, PINNMamba has the lowest number of parameters (except KAN), while achieving consistently the best performance.

\begin{table}

  \caption{Integrating PINNMamba with advanced training strategies and loss auto-balancing strategy. The rMAE is reported here.}
  
  \centering
    \small
  \begin{tabular}{lccc}

    \toprule 
    Method & Convection & Reaction & Wave\\
   \midrule
   PINNMamba & 0.0188 & 0.0094 & 0.0197\\
   +gPINN & 0.0172& 0.0123 & 0.0264 \\
   +vPINN & 0.0236 & 0.0092& 0.0169\\
   +RoPINN & 0.0102& 0.0099& 0.0121\\
    \midrule
    +NTK &0.0179& 0.0079& 0.0147\\
    +NTK+RoPINN &0.0127& 0.0072& 0.0106\\

    \bottomrule
  \end{tabular}
  \normalsize
  \label{tab:para}
  \vspace{-3mm}
\end{table}

\begin{figure*}[t!]
    \centering
    \includegraphics[width=\textwidth]{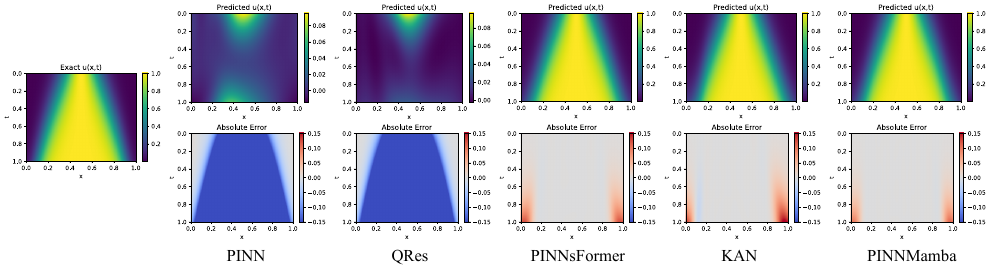}
    \vspace{-8mm}
    \caption{The ground truth solution, prediction (top), and absolute error (bottom) on reaction equations.}
    \label{fig:reac}

\end{figure*}

\subsection{Combination with Other Methods}
\vspace{-1mm}
Since PINNMamba mainly focuses on model architecture, it can be integrated with other methods effortlessly. 
    We explore the feasibility and their performance in combination with advanced training paradigm, as well as loss balancing.

\textbf{Training Paradigm.} We show the rMAE of PINNMamba when integrated with advanced strategies in Table~\ref{tab:para}. We observe that gPINN~\cite{yu2022gradient} and vPINN~\cite{kharazmi2019variational} erratically deliver some performance gains on some tasks. 
    This is due to the fact that the regularization provided by gPINN and vPINN in the form of a loss function through the gradient and variational residuals has little effect on PINNMamba, since SSM itself is sufficiently regularized. RoPINN~\cite{wu2024ropinn} reduces the PINNMamba's error on convection and wave equations by about 40\%, since it complements the spatial continuity dependency.

\textbf{Neural Tangent Kernel.} Dynamic tuning of losses via Neural Tangent Kernel(NTK)~\cite{wang2022and} has been shown to have the effect of smoothing out the loss landscape. 
PINNMamba also works well with the NTK-adopted loss function. As shown in Table~\ref{tab:para}, NTK can reduce PINNMamba error by 5-25\%. 
The combination of RoPINN and NTK can further improve the overall performance of PINNMamba, which demonstrates the excellent suitability of PINNMamba with other PINN optimization methods.

\begin{figure}[t!]
    \centering
    \includegraphics{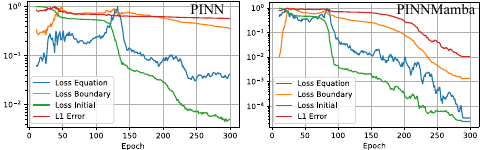}
    \vspace{-2mm}
    \caption{Loss and $\ell_1$-Error Curve w.r.t Training Iteration.}
    \label{fig:losserror}
    \vspace{-2mm}
\end{figure}
\vspace{-2mm}
\subsection{Loss-Error Consistency Analysis}
\vspace{-1mm}

Our other interest is the role of PINNMamba for the elimination of simplicity bias. Models affected by simplicity bias that fall into over-smoothing solutions will show inconsistent decreasing trends in loss and error during training. 
    As shown in Fig.~\ref{fig:losserror}, in the training process for solving convection equations, the rMAE of PINN doesn't descend as $\mathcal L_\mathcal F$ and $\mathcal L_\mathcal I$. 
        This suggests that PINN is trapped in an over-smoothing solution, which is in agreement with our observation in Fig.~\ref{fig:conv}. 
As a comparison, we find that PINNMamba's losses descent processes show a high degree of consistency with its error descent process. 
    This indicates that PINNMamba does not tend to fall into a local optimum of oversimplified patterns.
        Instead, it tends to exhibit patterns that are consistent with the original PDEs.

\vspace{-2mm}
\subsection{Ablation Study}
\vspace{-1mm}
\begin{table}

  \caption{Ablation Studies. The rMAE is reported here.}
  
  \centering
    \small
  \begin{tabular}{lccc}

    \toprule 
    Method & Convection & Reaction & Wave\\
   \midrule
   PINNMamba & 0.0188 & 0.0094 & 0.0197\\
     -Sub-Sequence Align& 0.1436 & 0.0291& 0.0481\\
    -Sub +Long Sequence &0.6492& 0.6731& 0.3391\\
   -Time Varying SSM& 0.0241& 0.0179 & 0.0664 \\
       -SSM &0.7785& 0.9836& 0.3341\\

   -Wavelet +Tanh & 0.4531 &0.0299& 0.3151\\

    \bottomrule
  \end{tabular}
  \normalsize
  \label{tab:ablation}
  \vspace{-3mm}
\end{table}

To verify the validity of the various components of the PINNMamba, as shown in Table~\ref{tab:ablation}, we evaluate the performance of models subtracting these components from PINNMamba.

\textbf{Sub-Sequence.} We remove the sub-sequence alignment, which leads to a decrease in model performance, indicating the significance of the agreement formed through alignment in eliminating simplicity bias.
After replacing the sub-sequence with a long sequence of the entire domain, the model shows failure modes, in line with the sequence granularity analysis in Section~\ref{sec:subseq}.

\textbf{Time-Varying SSM.} We replace the selective SSM~\cite{gu2023mamba} with a linear time-invariant structure SSM~\cite{gu2022efficiently}, and there is some decrease in model performance, illustrating the role of predictive diversity in eliminating simplicity bias. 
And when we remove SSM completely and switch to MLP instead, the model has severe failure modes. 
        This demonstrates that SSM's adaptation for \textit{Continuous-Discrete Mismatch} allows the initial condition information to propagate sufficiently in time coordinates.

In addition, we also conducted a sensitivity analysis of the choice of sub-sequence length, activation. See Appendix~\ref{apx:sense}.

\vspace{-3mm}
\subsection{Experiments on Complex Problems}
\vspace{-1mm}
To further demonstrate the generalization of our method, we tested our model on partial PINNacle Benchmark~\cite{hao2023pinnacle} and Navier-Stokes equations. As shown in Fig.~\ref{fig:ns}, PINNMamba achieves the lowest error on the N-S equation. Just like PINNsFormer, PINNMamba also gets out-of-memory on some problems in PINNacle, which we identify as a major limitation of sequence-based methods. We discuss the details of PINNacle experiments in Appendix~\ref{apx:comp}.

\begin{figure}[t!]
    \centering
    \includegraphics{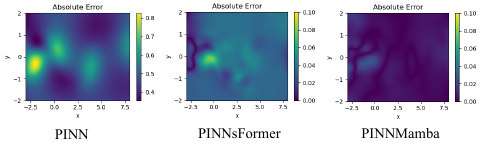}
    \vspace{-2mm}
    \caption{Absolute Error of pressure prediction of N-S equation}
    \label{fig:ns}
    \vspace{-2mm}
\end{figure}

\vspace{-3mm}
\section{Conclusion}
In this paper, we reveal that the mismatch between discrete training of PINNs and the continuous nature of PDEs, as well as simplicity bias are the key of failure modes. 
    In combating with such failure modes, we propose PINNMamba, an SSM-based sub-sequence learning framework. 
    PINNMamba successfully eliminates the failure modes, and meanwhile becomes the new state-of-the-art PINN architecture.
    
\section*{Impact Statement}

The development of physics-informed neural networks represents a transformative approach to solving differential equations by integrating physical laws directly into the learning process. 
    This work explores novel advancements in PINN architecture, to improve accuracy and eliminate the potential failure modes. 
    By refining PINN architectures, this study contributes to the broader adoption of physics-informed machine learning in fields such as computational fluid dynamics, material science, and engineering simulations. 
        The proposed enhancements lead to more robust and scalable models, facilitating real-world applications where conventional PINNs struggle with over-smoothing. There is no known negative impact from this study at this time.

\section*{Acknowledgements}

This work is supported, in part, by the National Science Foundation and the Institute of Education Sciences under Grant 2229873 (AI4ExceptionalEd), National Science Foundation under Grant 2235364 (FuSe-TG), and SUNY-IBM AI Collaborative Research Award. Any opinions, findings and conclusions or recommendations expressed in this material are those of the author(s) and do not necessarily reflect the views of the sponsors.

\bibliographystyle{_bib/icml2025}
\bibliography{_bib/example_paper}

\appendix
\onecolumn

\section{Proof of Theorem \ref{thm:continuous-discrete}}
\label{apx:proof3_1}

We start with a function $v$ such that $\mathcal{M}(v)$ is non-zero almost everywhere. Such a function exists because $\mathcal{M}$ is a non-zero differential operator. For example, if $\mathcal{M}$ is the Laplacian, a non-harmonic function can be chosen.

\begin{lemma}[Existence of Base Function]
    Let $\mathcal{M}$ be a non-degenerate differential operator on $\Omega \times [0,T]$, where $\Omega \subset \mathbb{R}^n$ is a domain. There exists a function $v \in C^\infty(\Omega \times [0,T])$ such that:  
$$
\mathcal{M}(v) \neq 0 \quad \text{for almost every } (x,t) \in \Omega \times [0,T].
$$
\end{lemma}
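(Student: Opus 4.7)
The plan is to exhibit a real-analytic function $v$ whose image under $\mathcal{M}$ is a nonzero real-analytic function on $\Omega\times[0,T]$, and then invoke the classical fact that the zero set of a nonzero real-analytic function on a connected open set has Lebesgue measure zero. This reduces the a.e.-nonvanishing requirement to the purely algebraic statement ``$\mathcal{M}(v)\not\equiv 0$''.

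First I would make the hypothesis precise by writing $\mathcal{M}=\sum_{|\alpha|\le k} a_\alpha(x,t)\,\partial^\alpha$ with not all coefficients $a_\alpha$ identically zero, which is the natural reading of ``non-degenerate''; for the main construction I would additionally assume the $a_\alpha$ are real-analytic, which covers the constant-coefficient and polynomial-coefficient cases that appear throughout the paper. Next I would introduce the exponential family $v_\xi(x,t):=\exp(\xi\cdot(x,t))$ for $\xi\in\mathbb{R}^{n+1}$ and compute
$$\mathcal{M}(v_\xi)(x,t)=P(x,t;\xi)\,e^{\xi\cdot(x,t)},\qquad P(x,t;\xi):=\sum_{|\alpha|\le k} a_\alpha(x,t)\,\xi^\alpha.$$
The function $P$ is a polynomial in $\xi$ whose coefficients are exactly the $a_\alpha(x,t)$, so by non-degeneracy $P\not\equiv 0$ as a function of $(x,t,\xi)$.

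The next step is to select $\xi^{*}$ so that $P(\cdot,\cdot;\xi^{*})\not\equiv 0$ on $\Omega\times[0,T]$. I would argue by contradiction: if $P(\cdot,\cdot;\xi)\equiv 0$ for every $\xi$, then differentiating with respect to $\xi$ and evaluating at $\xi=0$ would recover each coefficient $a_\alpha$ and force $a_\alpha\equiv 0$, contradicting non-degeneracy. Having fixed such a $\xi^{*}$, set $v:=v_{\xi^{*}}$; then $\mathcal{M}(v)=P(\cdot,\cdot;\xi^{*})\,e^{\xi^{*}\cdot(x,t)}$ is a product of a nonzero real-analytic factor and a nowhere-vanishing exponential, hence itself a nonzero real-analytic function on the connected open set $\Omega\times(0,T)$. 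The identity theorem for real-analytic functions then gives $\mathcal{M}(v)(x,t)\neq 0$ for a.e.\ $(x,t)\in\Omega\times[0,T]$, as required, and smoothness of $v$ is immediate.

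The step I expect to be the main obstacle is pinning down precisely what ``non-degenerate'' means in the authors' framework, because the clean analyticity argument above relies on $\mathcal{M}$ having analytic coefficients. If the authors intend to allow merely smooth coefficients, I would fall back on a softer argument: assume for contradiction that every smooth $v$ makes $\mathcal{M}(v)$ vanish on a set of full measure, then by a density/continuity argument (testing against bump functions localized near any point $(x_0,t_0)$ and against polynomial test functions $v$) one forces each $a_\alpha$ to vanish pointwise, contradicting the assumption that $\mathcal{M}$ is a nontrivial operator. Either route reaches the conclusion; the analytic construction is more constructive and I would present it as the main proof.
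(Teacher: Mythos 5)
Your argument is correct under its stated hypotheses, but it takes a genuinely different route from the paper's. The paper works entirely with bump functions: it first produces one smooth $w$ with $\mathcal{M}(w)(x_0,t_0)\neq 0$, localizes it with a cutoff, then repeats this over a countable dense family of points with pairwise disjoint supports and sums the pieces with rapidly decaying weights so that $\mathcal{M}(v)$ is nonzero on a dense open union $\bigcup_k U_k$; the almost-everywhere statement is then deduced from the claim that the complement of this union is nowhere dense ``and hence has Lebesgue measure zero.'' You instead plug in the exponential family $v_\xi=e^{\xi\cdot(x,t)}$, identify $\mathcal{M}(v_\xi)$ as the full symbol $P(x,t;\xi)$ times a nowhere-vanishing factor, choose $\xi^*$ with $P(\cdot,\cdot;\xi^*)\not\equiv 0$ (your contradiction argument for this is fine, since for fixed $(x,t)$ the coefficients of the polynomial $P(x,t;\cdot)$ are exactly the $a_\alpha(x,t)$), and invoke the fact that a nonzero real-analytic function on a connected open set vanishes only on a null set. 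What your route buys is a much cleaner measure-theoretic endgame: the paper's final step is actually not valid as written, since nowhere dense closed sets can have positive Lebesgue measure, whereas the null-set property of real-analytic zero sets is a genuine theorem. What it costs is the extra hypothesis of real-analytic coefficients, which the lemma does not state (the paper assumes only smooth coefficients), although it covers every operator that actually appears in the paper.

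The one genuine gap is in your fallback for merely smooth coefficients. Negating the desired conclusion gives you only that \emph{every} smooth $v$ has $\mathcal{M}(v)=0$ on some set of positive measure, not on a set of full measure; so testing against polynomials and bump functions, which shows that some $v$ has $\mathcal{M}(v)\neq 0$ on a nonempty open set, does not produce the contradiction you need. To reach the almost-everywhere statement with smooth coefficients you would have to carry out something like the paper's densification step, but with the union of the supports arranged to be conull (e.g.\ by a Vitali-type exhaustion of $\Omega\times[0,T]$ by disjoint balls on which the local corrections are supported) rather than merely dense.
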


\begin{proof}
    Since $\mathcal{M}$ is non-degenerate (i.e., not identically zero), there exists at least one function $w \in C^\infty(\Omega \times [0,T])$ and a point $(x_0, t_0) \in \Omega \times [0,T]$ such that:  
   $$
   \mathcal{M}(w)(x_0, t_0) \neq 0.
   $$  
   By continuity of $\mathcal{M}(w)$ (assuming smooth coefficients for $\mathcal{M}$), there is an open neighborhood $U \subset \Omega \times [0,T]$ around $(x_0, t_0)$ where $\mathcal{M}(w) \neq 0$.
   
   Construct a smooth bump function $\phi \in C^\infty(\Omega \times [0,T])$ with:  
   $\phi \equiv 1$ on a smaller neighborhood $V \subset U$,  
   and $\phi \equiv 0$ outside $U$.  
      Define $v_0 = \phi \cdot w$. Then $\mathcal{M}(v_0) = \mathcal{M}(\phi w)$ is non-zero on $V$ and smooth everywhere.  
   Let $\{(x_k, t_k)\}_{k=1}^\infty$ be a countable dense subset of $\Omega \times [0,T]$. For each $k$, repeat the above construction to obtain a function $v_k \in C^\infty(\Omega \times [0,T])$ such that: $\mathcal{M}(v_k) \neq 0$ in a neighborhood $U_k$ of $(x_k, t_k)$,  
   $\text{supp}(v_k) \subset U_k$,  
   and the supports $\{U_k\}$ are pairwise disjoint. 

   Define the function:  
   $$
   v = \sum_{k=1}^\infty \epsilon_k v_k,
   $$  
   where $\epsilon_k > 0$ are chosen such that the series converges in $C^\infty(\Omega \times [0,T])$ (e.g., $\epsilon_k = 2^{-k}/\max\{\|v_k\|_{C^k}, 1\}$).

   The set $\bigcup_{k=1}^\infty U_k$ is open and dense in $\Omega \times [0,T]$. Since $\mathcal{M}(v) \neq 0$ on this dense open set, the zero set $Z = \{(x,t) : \mathcal{M}(v)(x,t) = 0\}$ is contained in the complement of $\bigcup_{k=1}^\infty U_k$, which is nowhere dense and hence has Lebesgue measure zero. Therefore:  
   $$
   \mathcal{M}(v) \neq 0 \quad \text{for almost every } (x,t) \in \Omega \times [0,T].
   $$
\end{proof}

\begin{lemma}[Local Correction Functions]\label{lem:B2}
    Let $\mathcal{M}$ be a non-degenerate differential operator on $\Omega \times [0,T]$, and let $\chi^* = \{(x^*_1,t^*_1),\dots,(x^*_N,t^*_N)\} \subset \Omega \times [0,T]$. There exist smooth functions $\{w_i\}_{i=1}^N \subset C^\infty(\Omega \times [0,T])$ and radii $\epsilon_1, \dots, \epsilon_N > 0$ such that for each $i$:  
    
1. Compact Support: $\text{supp}(w_i) \subset B_{\epsilon_i}(x^*_i,t^*_i)$,  

2. Non-Vanishing Action: $\mathcal{M}(w_i)(x^*_i,t^*_i) \neq 0$, 

3. Disjoint Supports: $B_{\epsilon_i}(x^*_i,t^*_i) \cap B_{\epsilon_j}(x^*_j,t^*_j) = \emptyset$ for $i \neq j$. 
\end{lemma}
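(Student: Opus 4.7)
The construction is entirely local around each $(x^*_i,t^*_i)$, so the strategy is: first separate the points by pairwise disjoint open balls, then on each ball independently produce a smooth function whose image under $\mathcal M$ is nonzero at the center, and finally cut that function off so its support stays inside the ball. Since $\chi^*$ is a finite set lying in the interior of $\Omega\times[0,T]$, the quantities
\begin{equation*}
d_0 := \min_{i\neq j}\bigl\|(x^*_i,t^*_i)-(x^*_j,t^*_j)\bigr\|, \qquad d_1 := \min_i \operatorname{dist}\!\bigl((x^*_i,t^*_i),\,\partial(\Omega\times[0,T])\bigr)
\end{equation*}
are both strictly positive, so picking any $\epsilon_i < \tfrac{1}{2}\min(d_0, d_1)$ makes the balls $B_{\epsilon_i}(x^*_i,t^*_i)$ pairwise disjoint and contained in $\Omega\times[0,T]$, which secures condition (3).

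\textbf{Producing $w_i$.} For condition (2), non-degeneracy of $\mathcal M$ at $(x^*_i,t^*_i)$ lets us construct a smooth $u_i$ with $\mathcal M(u_i)(x^*_i,t^*_i)\neq 0$ via a polynomial ansatz: writing $\mathcal M = \sum_{|\alpha|\le k} a_\alpha\,\partial^\alpha$, pick a multi-index $\alpha_0$ with $a_{\alpha_0}(x^*_i,t^*_i)\neq 0$ and let $u_i$ be the Taylor monomial $\tfrac{1}{\alpha_0!}\bigl((x,t)-(x^*_i,t^*_i)\bigr)^{\alpha_0}$; then $\mathcal M(u_i)(x^*_i,t^*_i)=a_{\alpha_0}(x^*_i,t^*_i)\neq 0$. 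Now pick a standard bump $\phi_i\in C^\infty_c(\Omega\times[0,T])$ with $\operatorname{supp}(\phi_i)\subset B_{\epsilon_i}(x^*_i,t^*_i)$ and $\phi_i\equiv 1$ on the smaller concentric ball $B_{\epsilon_i/2}(x^*_i,t^*_i)$, and define $w_i := \phi_i\,u_i$. This directly gives condition (1), while condition (3) is preserved by disjointness of the $B_{\epsilon_i}$.

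\textbf{Verification of (2) — the main obstacle.} The subtlety is that the cutoff must not cancel the wanted value, since in general the Leibniz expansion
\begin{equation*}
\mathcal M(\phi_i u_i) \;=\; \sum_{|\alpha|\le k} a_\alpha \sum_{\beta\le\alpha}\binom{\alpha}{\beta}\partial^\beta\phi_i\;\partial^{\alpha-\beta}u_i
\end{equation*}
produces many cross terms that could, in principle, cancel the leading contribution. This is precisely why demanding $\phi_i\equiv 1$ on a whole neighborhood of $(x^*_i,t^*_i)$ is essential: at that point every partial derivative of $\phi_i$ of positive order vanishes, collapsing the sum to $\mathcal M(w_i)(x^*_i,t^*_i)=\mathcal M(u_i)(x^*_i,t^*_i)\neq 0$. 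The only other mild technicality is boundary placement — if some $(x^*_i,t^*_i)$ sat on $\partial\Omega$ or at $t\in\{0,T\}$ one would replace $B_{\epsilon_i}$ by its intersection with $\Omega\times[0,T]$ and use a one-sided cutoff — but the flat-plateau argument then goes through verbatim.
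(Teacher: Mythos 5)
Your proof is correct and follows essentially the same route as the paper's: pairwise-disjoint balls of radius less than half the minimal separation, a locally defined smooth function with nonvanishing $\mathcal M$-action at the center, and a cutoff that is identically $1$ on a smaller concentric ball so the value at the center is untouched. Your additions — the explicit monomial ansatz for $u_i$, the Leibniz-rule justification that the cutoff cannot cancel condition (2), and the remark on boundary-adjacent points — are refinements of the same argument rather than a different approach.
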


\begin{proof}
    Let $d_{\text{min}} = \min_{i \neq j} \text{dist}\left((x^*_i,t^*_i), (x^*_j,t^*_j)\right)$ be the minimal distance between distinct points in $\chi^*$. For all $i$, choose radii $\epsilon_i > 0$ such that:  
$$
\epsilon_i < \frac{d_{\text{min}}}{2}.
$$  
This ensures the balls $B_{\epsilon_i}(x^*_i,t^*_i)$ are pairwise disjoint.  

For each $(x^*_i,t^*_i)$, since $\mathcal{M}$ is non-degenerate, there exists a smooth function $f_i \in C^\infty(\Omega \times [0,T])$ such that:  
$$
\mathcal{M}(f_i)(x^*_i,t^*_i) \neq 0.
$$  This is because, when $\mathcal{M}$ is non-degenerate, its action cannot vanish on all smooth functions at $(x^*_i,t^*_i)$. For instance, if $\mathcal{M}$ contains a derivative $\partial_{x_k}$, take $f_i = x_k$ near $(x^*_i,t^*_i)$.

Then for each $i$, construct a smooth bump function $\phi_i \in C^\infty(\Omega \times [0,T])$ satisfying:  

1. $\phi_i \equiv 1$ on $B_{\epsilon_i/2}(x^*_i,t^*_i)$, 

2. $\phi_i \equiv 0$ outside $B_{\epsilon_i}(x^*_i,t^*_i)$, 

3. $0 \leq \phi_i \leq 1$ everywhere.  

Therefore, define the localized function:  
$$
w_i = \phi_i \cdot f_i.
$$  
By construction:  

1. $\text{supp}(w_i) \subset B_{\epsilon_i}(x^*_i,t^*_i)$,

2. $w_i = f_i$ on $B_{\epsilon_i/2}(x^*_i,t^*_i)$, so  
$$
\mathcal{M}(w_i)(x^*_i,t^*_i) = \mathcal{M}(f_i)(x^*_i,t^*_i) \neq 0.
$$  

Since $\epsilon_i < \frac{d_{\text{min}}}{2}$, the distance between any two balls $B_{\epsilon_i}(x^*_i,t^*_i)$ and $B_{\epsilon_j}(x^*_j,t^*_j)$ is at least $d_{\text{min}} - 2\epsilon_i > 0$. Thus, the supports of $w_i$ and $w_j$ are disjoint for $i \neq j$. 

Therefore, the functions $\{w_i\}_{i=1}^N$ satisfy all required conditions.

\end{proof}

We now state the one-dimensional case of Theorem~\ref{thm:continuous-discrete} here:

\begin{lemma}[One-Dimensional Case of Theorem~\ref{thm:continuous-discrete}]
\label{lemma:1d}
    Let $\chi^* = \{(x^*_1,t^*_1),\dots,(x^*_N,t^*_N)\}\subset \Omega\times[0,T]$. Then for differential operator $\mathcal M$ there exist infinitely many functions
$u_\theta : \Omega \to \mathbb{R}$ parametrized by $\theta$ , s.t.
$$ \mathcal{M}(u_\theta(x^*_i,t^*_i)) = 0 \quad \text{for } i=1,\dots,N,$$ $$ 
   \mathcal{M}(u_\theta(x,t)) \neq 0
   \quad \text{for a.e. } x \in \Omega\times[0,T] \setminus \chi^*.$$
\end{lemma}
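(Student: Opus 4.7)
My plan is to combine the base function $v$ from the existence lemma (which satisfies $\mathcal{M}(v) \neq 0$ almost everywhere) with the disjointly supported bumps $w_1, \ldots, w_N$ produced by Lemma~\ref{lem:B2} to form
\begin{equation*}
    u := v + \sum_{i=1}^N c_i w_i,
\end{equation*}
where the coefficients $c_i$ are chosen so that $\mathcal{M}(u)$ vanishes exactly at each $(x^*_i, t^*_i)$. Disjointness of supports decouples the $N$ pointwise constraints, so each $c_i$ can be solved for independently. An infinite family of solutions is then generated by adding a scaled perturbation supported away from $\chi^*$, parametrized by $\theta \in \mathbb{R}$.

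\textbf{Solving the local constraints.} Because $\mathrm{supp}(w_j) \cap B_{\epsilon_i}(x^*_i, t^*_i) = \emptyset$ for $j \neq i$, the only bump contributing at $(x^*_i, t^*_i)$ is $w_i$ itself, and linearity of the local differential operator gives
\begin{equation*}
    \mathcal{M}(u)(x^*_i, t^*_i) = \mathcal{M}(v)(x^*_i, t^*_i) + c_i\, \mathcal{M}(w_i)(x^*_i, t^*_i).
\end{equation*}
Since $\mathcal{M}(w_i)(x^*_i, t^*_i) \neq 0$ by Lemma~\ref{lem:B2}, the choice $c_i := -\mathcal{M}(v)(x^*_i, t^*_i)/\mathcal{M}(w_i)(x^*_i, t^*_i)$ (or $c_i := 0$ if the numerator already vanishes) ensures $\mathcal{M}(u)(x^*_i, t^*_i) = 0$ for every $i$.

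\textbf{Main obstacle.} The hardest step is verifying that $\mathcal{M}(u) \neq 0$ almost everywhere on $(\Omega \times [0,T]) \setminus \chi^*$. Outside $\bigcup_i \overline{B_{\epsilon_i}(x^*_i, t^*_i)}$, $u$ coincides with $v$ and the conclusion is immediate from the existence lemma. Inside each ball $B_{\epsilon_i}$, however, $\mathcal{M}(u) = \mathcal{M}(v) + c_i \mathcal{M}(w_i)$ could in principle vanish on a set of positive measure. To handle this I would tighten the construction of $w_i$ by building it from a real-analytic kernel multiplied by a cutoff equal to $1$ on a sub-ball $B_{\epsilon_i/2}$, so that $\mathcal{M}(v) + c_i \mathcal{M}(w_i)$ is real-analytic on $B_{\epsilon_i/2}$ and hence, unless identically zero, its zero set has Lebesgue measure zero. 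In the degenerate case where it is identically zero I would first slightly perturb $v$ using another bump located far from all collocation points to break the unlikely analytic matching. The thin annulus $B_{\epsilon_i} \setminus B_{\epsilon_i/2}$ can then be absorbed by further shrinking $\epsilon_i$ without affecting any of the pointwise constraints.

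\textbf{Producing infinitely many $u_\theta$.} Finally, choose any smooth $\eta \in C_c^\infty(\Omega \times [0,T])$ whose support is disjoint from $\bigcup_i \overline{B_{\epsilon_i}(x^*_i, t^*_i)}$ and for which $\mathcal{M}(\eta)$ is nonzero almost everywhere on its support, and define $u_\theta := u + \theta \eta$ for $\theta \in \mathbb{R}$. Every pointwise constraint is preserved because $\eta$ vanishes in a neighborhood of $\chi^*$, while for all but a Lebesgue-null set of $\theta$ the combination $\mathcal{M}(u_\theta) = \mathcal{M}(u) + \theta \mathcal{M}(\eta)$ remains nonzero almost everywhere, yielding the desired uncountable family.
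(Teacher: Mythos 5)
Your proposal follows essentially the same route as the paper's proof: take the base function $v$ with $\mathcal{M}(v)\neq 0$ a.e., add the disjointly supported local corrections $w_i$ from Lemma~\ref{lem:B2} with coefficients $c_i=-\mathcal{M}(v)(x_i^*,t_i^*)/\mathcal{M}(w_i)(x_i^*,t_i^*)$, and generate an infinite family by varying parameters that do not disturb the pointwise constraints. In fact your treatment is more careful on one point: the paper asserts that the corrections are confined to the ``measure-zero set'' $\bigcup_i B_{\epsilon_i}(x_i^*,t_i^*)$, which is not measure zero, whereas you correctly flag that $\mathcal{M}(v)+c_i\mathcal{M}(w_i)$ could in principle vanish on a positive-measure subset of each ball and supply a fix via real-analyticity of the localized correction plus a generic perturbation of $v$.
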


\begin{proof}
    Define the corrected function:
$$
u_\theta = v + \sum_{i=1}^N \alpha_i w_i,
$$
where $w_i$ is the local correction function defined in Lemma~\ref{lem:B2}, $\alpha_i \in \mathbb{R}$ are scalars chosen such that:
$$
\mathcal{M}(u_\theta)(x_i^*, t_i^*) = \mathcal{M}(v)(x_i^*, t_i^*) + \alpha_i \mathcal{M}(w_i)(x_i^*, t_i^*) = 0.
$$

Since $\mathcal{M}(w_i)(x_i^*, t_i^*) \neq 0$, we can solve for $\alpha_i$:
$$
\alpha_i = -\frac{\mathcal{M}(v)(x_i^*, t_i^*)}{\mathcal{M}(w_i)(x_i^*, t_i^*)}.
$$

Outside the union of supports $\bigcup_{i=1}^N B_{\epsilon_i}(x_i^*, t_i^*)$, we have:
$$
\mathcal{M}(u_\theta) = \mathcal{M}(v) + \sum_{i=1}^N \alpha_i \mathcal{M}(w_i) = \mathcal{M}(v),
$$
since $w_i \equiv 0$ outside $B_{\epsilon_i}(x_i^*, t_i^*)$. By construction, $\mathcal{M}(v) \neq 0$ almost everywhere. 

The parameters $\theta = (\epsilon_1, \dots, \epsilon_N, \alpha_1, \dots, \alpha_N)$ can be varied infinitely by varying $w_i$: The bump functions $w_i$ can be scaled, translated, or reshaped (e.g., Gaussian vs. polynomial) while retaining the properties of Local Correction in Lemma~\ref{lem:B2} and varying $\epsilon_i$: For each $i$, choose $\epsilon_i$ from a continuum $(0, \delta_i)$, where $\delta_i$ ensures disjointness.

Thus, the family $\{u_\theta\}$ is uncountably infinite.

The set $\chi^*$ by definition has Lebesgue measure zero in $\Omega \times [0,T]$. The corrections $\sum_{i=1}^N \alpha_i w_i$ are confined to the measure-zero set $\bigcup_{i=1}^N B_{\epsilon_i}(x_i^*, t_i^*)$. Therefore:
$$
\mathcal{M}(u_\theta) \neq 0 \quad \text{for a.e. } (x,t) \in \Omega \times [0,T] \setminus \chi^*.
$$
\end{proof}

We now generalize Lemma~\ref{lemma:1d} to $m$-dimension, to get Theorem~\ref{thm:continuous-discrete}.

\begin{theorem}[Theorem~\ref{thm:continuous-discrete}]
    Let $\chi^* = \{(x^*_1,t^*_1),\dots,(x^*_N,t^*_N)\}\subset \Omega\times[0,T]$. Then for differential operator $\mathcal M$ there exist infinitely many functions
$u_\theta : \Omega \to \mathbb{R}^m$ parametrized by $\theta$ , s.t.
$$ \mathcal{M}(u_\theta(x^*_i,t^*_i)) = 0 \quad \text{for } i=1,\dots,N,$$ $$ 
   \mathcal{M}(u_\theta(x,t)) \neq 0
   \quad \text{for a.e. } x \in \Omega\times[0,T] \setminus \chi^*.$$
\end{theorem}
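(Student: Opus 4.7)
The plan is to generalize the three-step construction used to prove Lemma~\ref{lemma:1d} — base function, local corrections, parameterized family — from a scalar to a vector-valued target. The underlying technology is identical; the vector-valued case just requires the bump-function scaffolding to be replayed in $\mathbb R^m$.

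First, I would build a vector-valued base function $v \in C^\infty(\Omega \times [0,T], \mathbb R^m)$ with $\mathcal{M}(v) \neq 0$ almost everywhere, by adapting the Existence of Base Function lemma: pick a standard basis vector $e_k$ of $\mathbb R^m$ and a smooth scalar $f$ with $\mathcal{M}(f\cdot e_k)\not\equiv 0$ (which exists because $\mathcal{M}$ is non-degenerate), then build $v$ via the same countable-support sum argument. The resulting $v$ has $\mathcal{M}(v) \neq 0$ on an open dense set, which has full Lebesgue measure. Next, I would construct vector-valued local correction functions $w_1,\dots,w_N$ with pairwise-disjoint compact supports in small balls $B_{\epsilon_i}(x_i^*,t_i^*)$, each satisfying $\mathcal{M}(w_i)(x_i^*,t_i^*)\neq 0$, by a direct transcription of Lemma~\ref{lem:B2} to $\mathbb R^m$-valued bump functions. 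Finally, I would set $u_\theta = v + \sum_{i=1}^N \alpha_i w_i$ and pick scalars $\alpha_i$ that cancel $\mathcal{M}(v)(x_i^*,t_i^*)$ at each collocation point; disjointness of supports guarantees that each $\alpha_i$ affects only its own constraint. Outside $\bigcup_i B_{\epsilon_i}(x_i^*,t_i^*)$ we have $\mathcal{M}(u_\theta) = \mathcal{M}(v) \neq 0$ a.e., so the conclusion on $\Omega \times [0,T] \setminus \chi^*$ holds, and infiniteness of the parameterized family follows by varying bump shapes and radii $\epsilon_i$ exactly as in Lemma~\ref{lemma:1d}.

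The main obstacle I anticipate is the case where $\mathcal{M}$ maps into a codomain of dimension $p>1$: then $\mathcal{M}(v)(x_i^*,t_i^*)$ is a vector and a single scalar coefficient $\alpha_i$ cannot in general cancel it. To handle this, I would introduce at each collocation point a packet of $p$ independent correction functions $w_i^{(1)},\dots,w_i^{(p)}$ supported in the same small ball whose actions $\{\mathcal{M}(w_i^{(j)})(x_i^*,t_i^*)\}_{j=1}^p$ span the target space. Existence of such a packet requires strengthening the non-degeneracy hypothesis to surjectivity of the (principal) symbol of $\mathcal{M}$ at each $(x_i^*,t_i^*)$; once that is in place, choosing the coefficients $\{\alpha_i^{(j)}\}$ becomes a routine per-point linear solve, the disjoint-support property still decouples the constraints, and the rest of the argument carries through verbatim.
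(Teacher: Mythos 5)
Your proposal is correct and follows essentially the same route as the paper's proof: a smooth base function $v$ with $\mathcal{M}(v)\neq 0$ almost everywhere, disjointly supported bump-function corrections $w_i$ near each collocation point, a per-point linear solve for the coefficients, and the observation that the corrections are confined to a measure-zero set so the almost-everywhere nonvanishing of $\mathcal{M}(v)$ survives off $\chi^*$. If anything, you are more careful than the paper at the one delicate step: the paper justifies solvability of the per-point linear system in the vector-valued case by asserting that the $w_{i,j}$ are ``linearly independent,'' whereas the actual requirement is the one you identify --- that the evaluations $\mathcal{M}(w_i^{(j)})(x_i^*,t_i^*)$ span the codomain of $\mathcal{M}$, which calls for a pointwise surjectivity-type hypothesis on the operator rather than mere linear independence of the correction functions.
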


\begin{proof}
    It is trivial to generalize the Lemma~\ref{lemma:1d} to the case $u_\theta : \Omega \to \mathbb{R}^m$, by constructing:
    $$
   u_\theta = v + \sum_{i=1}^N \sum_{j=1}^m \alpha_{i,j} w_{i,j},
   $$
   where $ \alpha = (\alpha_{i,j}) \in \mathbb{R}^{N \cdot m} $. Adjust $ \alpha_{i,j} $ such that:
   $$
   \mathcal{M}(u_\theta)(x_i^*, t_i^*) = \mathcal{M}(v)(x_i^*, t_i^*) + \sum_{j=1}^m \alpha_{i,j} \mathcal{M}(w_{i,j})(x_i^*, t_i^*) = 0.
   $$
   This gives a linear system for $ \alpha $, which is solvable because the $ w_{i,j} $ are linearly independent.
\end{proof}

\section{Linear Time-Varying System}
\label{apx:LTI}

To adjust the given Linear Time-Invariant system to a Linear Time-Varying system, we replace the constant matrices $ \bar{A} $, $ \bar{B} $, and $ C $ with their time-varying counterparts $ \bar{A}(k) $, $ \bar{B}(k) $, and $ C(k) $. The state transition matrix $ \bar{A}^{k-i} $ in the LTI system becomes the product of time-varying matrices from time $ i $ to $ k-1 $. The resulting time-varying output equation is:

\begin{equation}
    \mathbf{u}_k = C(k) \Phi(k, 0) \mathbf{h}_0 + C(k) \sum_{i=0}^k \Phi(k, i) \bar{B}(i) \mathbf{x}_i,
\end{equation}

where $ \Phi(k, i) $ is the state transition matrix from time $ i $ to $ k $, defined as:
\begin{equation}
      \Phi(k, i) = \begin{cases} 
    \bar{A}(k-1) \bar{A}(k-2) \cdots \bar{A}(i) & \text{if } k > i, \\
    I & \text{if } k = i.
  \end{cases}
\end{equation}

and the term $ \Phi(k, 0) \mathbf{h}_0 $ represents the free response due to the initial condition $ \mathbf{h}_0 $.

The summation $ \sum_{i=0}^k \Phi(k, i) \bar{B}(i) \mathbf{x}_i $ includes contributions from all inputs $ \mathbf{x}_i $ up to time $ k $, with $ \Phi(k, i) \bar{B}(i) $ capturing the time-varying dynamics.

To adjust the Eq.~\ref{equ:timeloss} to a Time-Varying system The state transition term $ \bar{A}^{k-i} $ becomes the time-ordered product $ \Phi(k, i) $, and the output $ \mathbf{u}_k $ now explicitly depends on time-varying dynamics. The adjusted equation becomes:

\begin{equation}
    \sum_{i=1}^M \mathcal{L}_{\mathcal{F}}(u(x_i, k\Delta t)) = \frac{1}{M} \left\| \mathcal{F}\left( \mathbf{1}_M \cdot \mathbf{u}_k \right) \right\|^2= \frac{1}{M} \left\| \mathcal{F}\left( \mathbf{1}_M \cdot \mathbf{u}_k = C(k) \Phi(k, 0) \mathbf{h}_0 + C(k) \sum_{i=0}^k \Phi(k, i) \bar{B}(i) \mathbf{x}_i\right) \right\|^2.
\end{equation}

This modification ensures consistency with the Time-Varying system’s time-dependent parameters while preserving the structure of the original loss function.

\section{PDEs Setups}
\label{apx:setup}

\subsection{1-D Convection}

The 1-D convection equation, also known as the 1-D advection equation, is a partial differential equation that models the transport of a scalar quantity $ u(x,t) $ (such as temperature, concentration, or momentum) due to fluid motion at a constant velocity $ c $. It is a fundamental equation in fluid dynamics and transport phenomena. The equation is given by:
\begin{gather}
    \frac{\partial u}{\partial t} + \beta \frac{\partial u}{\partial x} = 0,\; \forall x \in[0,2\pi], t\in [0,1],\nonumber\\
    u(x,0) = \sin x,\;\forall x \in[0,2\pi],\\
    u(0,t)=u(2\pi,t),\;\forall  t\in [0,1],\nonumber
\end{gather}
where $\beta$ is the constant convection (advection) speed. As $\beta$ increases, the equation will be harder for PINN to approximate. It is a well-known equation with failure mode for PINN. We set $\beta=50$ following common practice~\cite{zhao2024pinnsformer,wu2024ropinn}.

The 1-D convection equation's analytical solution is given by:
\begin{equation}
    u_\text{ana}(x,t) = \sin(x-\beta t).
\end{equation}

\subsection{1-D Reaction}

The 1-D reaction equation is a partial differential equation that models how a chemical species reacts over time and (optionally) varies along a single spatial dimension. The equation is given by:
\begin{gather}
    \frac{\partial u}{\partial t} -\rho u(1-u) = 0,\; \forall x \in[0,2\pi], t\in [0,1],\nonumber\\
    u(x,0) = \exp(-\frac{(x-\pi)^2}{2(\pi/4)^2}),\;\forall x \in[0,2\pi],\\
    u(0,t)=u(2\pi,t),\; \forall  t\in [0,1],\nonumber
\end{gather}
where $\rho$ is the growth rate coefficient. As $\rho$ increases, the equation will be harder for PINN to approximate. It is a well-known equation with failure mode for PINN. We set $\rho=5$ following common practice~\cite{zhao2024pinnsformer,wu2024ropinn}.

The 1-D reaction equation's analytical solution is given by:
\begin{equation}
    u_\text{ana}=\frac{\exp(-\frac{(x-\pi)^2}{2(\pi/4)^2})\exp(\rho t)}{\exp(-\frac{(x-\pi)^2}{2(\pi/4)^2})(\exp(\rho t)-1)+1}.
\end{equation}

\subsection{1-D Wave}

The 1-D wave equation is a partial differential equation that describes how a wave propagates through a medium, such as a vibrating string.  We consider such an equation given by:
\begin{gather}
    \frac{\partial^2 u}{\partial t^2} - 4\frac{\partial^2 u}{\partial x^2} = 0,\; \forall x \in[0,1], t\in [0,1],\nonumber\\
    u(x,0) = \sin(\pi x)+\frac{1}{2}\sin(\beta \pi x), \;\forall x\in[0,1],\\
    \frac{\partial u(x,0)}{\partial t} = 0, \;\forall x\in[0,1],\nonumber\\
    u(0,t)=u(1,t) = 0, \; \forall  t\in [0,1],\nonumber
\end{gather}
where $\beta$ is a wave frequency coefficient. We set $\beta$ as 3 following common practice~\cite{zhao2024pinnsformer,wu2024ropinn}. The wave equation contains second-order derivative terms in the equation and first-order derivative terms in the initial condition, which is considered to be hard to optimize~\cite{wu2024ropinn}. This example illustrates that PINNMamba can better capture the time continuum because its differentiation for time is directly defined by the matrix, whose differential scale is uniform for multiple orders.

The 1-D wave equation's analytical solution is given by:
\begin{equation}
    u_\text{ana}(x,t)=\sin(\pi x)\cos(2\pi t)+\sin(\beta \pi x)\cos(2\beta \pi t).
\end{equation}

\subsection{2-D Navier-Stokes}

The 2-D Navier-Stokes equation describes the motion of fluid in two spatial dimensions $x$ and $y$. It is fundamental in fluid dynamics and is used to model incompressible fluid flows. We consider such an equation given by:
\begin{gather}
    \frac{\partial u}{\partial t} + \lambda_1 (u\frac{\partial u}{\partial x} + v \frac{\partial u}{\partial y}) = - \frac{\partial p}{\partial x} + \lambda_2 (\frac{\partial^2 u}{\partial x^2} + \frac{\partial^2 u}{\partial v^2}), \nonumber \\
    \frac{\partial v}{\partial t} + \lambda_1 (u\frac{\partial v}{\partial x} + v \frac{\partial v}{\partial y}) = - \frac{\partial p}{\partial y} + \lambda_2 (\frac{\partial^2 u}{\partial x^2} + \frac{\partial^2 u}{\partial v^2}),
\end{gather}
where $u(x,y,t)$, $v(x,y,t)$, and $p(x,y,t)$ are the x-coordinate velocity field, y-coordinate velocity field, and pressure field, respectively. We set $\lambda_1 = 1$ and $\lambda_2 = 0.01$ following common practice~\cite{zhao2024pinnsformer,raissi2019physics}. 

The 2-dimensional Navier-Stokes equation doesn't have an analytical solution that can be described by existing mathematical symbols, we take~\citet{raissi2019physics}'s finite-element numerical simulation as ground truth. 

\subsection{PINNNacle}
PINNacle~\cite{hao2023pinnacle} contains 16 hard PDE problems, which can be classified as Burges, Poisson, Heat, Navier-Stokes, Wave, Chaotic, and other High-dimensional problems. We only test PINNmamba on 6 problems, because solving the remaining problems with a sequence-based PINN model will cause an out-of-memory issue, even on the most advanced NVIDIA H100 GPU. Please refer to the original paper of PINNacle~\cite{hao2023pinnacle} for the details of the benchmark.

\section{Training Details}
\label{apx:hyperparam}

\textbf{Hyperparameters.} We provide the training hyperparameters of the main experiments in Table~\ref{tab:hyperpara}.

\begin{table}[H]
\vspace{-3mm}
  \caption{Hyperparameters for main results.}
  
  \centering
    \small
  \begin{tabular}{l|c|c}

    \toprule 
    Model & Hyperparameter Type & Value\\
    \midrule
   \multirow{ 2}{*}{PINN} & network depth & 4\\
   & network width & 512 \\
    \midrule
   \multirow{ 2}{*}{QRes}& network depth & 4 \\
    & network width & 256 \\
    \midrule
    \multirow{ 3}{*}{KAN}   & network width  & [2,5,5,1] \\
     & grid size & 5\\
       & grid\_epsilon  & 1.0 \\
       \midrule
        \multirow{ 7}{*}{PINNsFormer}   & \# of encoder  & 1 \\
     & \# of decoder & 1\\
       & embedding size  & 32 \\
        & attention head  & 2 \\
                & MLP hidden width  & 512 \\
            & sequence length $k$  & 5 \\
            & sequence interval $\Delta t$  & 1e-4 \\
            \midrule
            \multirow{ 7}{*}{PINNMamba}   & \# of encoder  & 1 \\
       & embedding size  & 32 \\
        & $\Delta,B,C$ width & 8 \\
                & MLP hidden width  & 512 \\
            & sequence length $k$  & 7 \\
            & sequence interval $\Delta t$  & 1e-2 \\

    \bottomrule
  \end{tabular}
  \normalsize
  \label{tab:hyperpara}

\end{table} 

\textbf{Computation Overhead.} We report the training time and memory consumption of baseline models and PINNMamba on the convection equation in Table~\ref{tab:training}. 

\begin{table}[H]
\vspace{-3mm}
  \caption{Memory overhead and training time on H100 GPU for solving convection equation. }
  
  \centering
    \small
  \begin{tabular}{lcc}

    \toprule 
    Method & Memory Overhead & Training Time\\
   \midrule
   PINN & 1605 MB & 0.28 s/it \\
   QRes & 1561 MB& 0.38 s/it \\
   PINNsFormer & 8703 MB & 1.82 s/it\\
   KAN & 1095 MB & 2.73 s/it\\
    \midrule
    PINNMamba & 7899 MB & 1.99 s/it\\

    \bottomrule
  \end{tabular}
  \normalsize
  \label{tab:training}

\end{table}

\section{Sensitivity Analysis}

PINNMamba can be further improved by hyper-parameters tuning, we test the sub-sequence length, interval and activation selection in this section.

\label{apx:sense}

\textbf{Sub-sequence Length.} We test the effect of different sub-sequence lengths on model performance. As shown in Table~\ref{tab:length}, we test the length of 3, 5, 7, 9, 21.  Length $k =7$ achieves the best performance on reaction and wave equations, while  $k =5$ achieves the best performance on convection equation.

\begin{table}[H]
\vspace{-3mm}
  \caption{Results with different Sub-Sequence Length of PINNmamba.}
  
  \centering
    \small
  \begin{tabular}{c|cc|cc|cc}

    \toprule 
      &\multicolumn{2}{c}{Convection }&\multicolumn{2}{c}{Reaction}&\multicolumn{2}{c}{Wave}\\
    \cmidrule(lr){2-3}\cmidrule(lr){4-5}\cmidrule(lr){6-7}
   Length & rMAE & rRMSE & rMAE & rRMSE & rMAE & rRMSE\\
   \midrule
   3 &0.6698 & 0.7271 &0.0150 & 0.0331&  0.5288 & 0.533926 \\
 5 &\textbf{0.0092} & \textbf{0.0099}&0.0131 & 0.0286 & 0.0278&  0.0303 \\
 7 &0.0188 & 0.0201 &\textbf{0.0094} &\textbf{0.0217} & \textbf{0.0197} &  \textbf{0.0199}\\
 9 &0.0410 &0.0444 &0.0105 & 0.0246 &0.0343 & 0.0374 \\
21 &1.0263 & 1.0596 &0.0884 & 0.1588 & 0.0458& 0.0493 \\

    \bottomrule
  \end{tabular}
  \normalsize
  \label{tab:length}

\end{table}

\textbf{Sub-Sequence Interval.}  We test the effect of different sub-sequence intervals on model performance. As shown in Table~\ref{tab:interval}, we test the intervals of $2e-3$, $5e-3$, $1e-2$, $1e-1$. The interval $\Delta t =1e-2$ achieves the best performance on convection and wave equations, while $\Delta t = 5e-3$ achieves the best performance on reaction. Note that, when $\Delta t = 1e-1$, we cannot build the sub-sequence contrastive alignment.
\begin{table}[H]
\vspace{-3mm}
  \caption{Results with different Sub-Sequence Interval of PINNmamba, $k$ is set to 7.}
  
  \centering
    \small
  \begin{tabular}{c|cc|cc|cc}

    \toprule 
      &\multicolumn{2}{c}{Convection }&\multicolumn{2}{c}{Reaction}&\multicolumn{2}{c}{Wave}\\
    \cmidrule(lr){2-3}\cmidrule(lr){4-5}\cmidrule(lr){6-7}
   Interval & rMAE & rRMSE & rMAE & rRMSE & rMAE & rRMSE\\
   \midrule
   2e-3 &0.0249& 0.0257 & 0.0739 & 0.1389 &0.1693 &0.1903  \\
 5e-3 & 0.0243& 0.0287& \textbf{0.0083} & \textbf{0.0185} & 0.2492 & 0.2690 \\
 1e-2 &\textbf{0.0188} & \textbf{0.0201} &0.0094 &0.0217 & \textbf{0.0197} &  \textbf{0.0199}\\
 1e-1 & 1.2169 &1.3480 &0.4324& 0.5034   &0.0666 &  0.0703\\

    \bottomrule
  \end{tabular}
  \normalsize
  \label{tab:interval}

\end{table}

\textbf{Activation Function.} We test the activation function's effect on the performance of PINNMamba. We report the results of ReLU~\cite{nair2010rectified}, Tanh~\cite{fan2000extended}, and Wavelet~\cite{zhao2024pinnsformer} in Table~\ref{tab:activation}.
\begin{table}[H]
\vspace{-3mm}
  \caption{Results with different activation function in PINNmamba.}
  
  \centering
    \small
  \begin{tabular}{c|cc|cc|cc}

    \toprule 
      &\multicolumn{2}{c}{Convection }&\multicolumn{2}{c}{Reaction}&\multicolumn{2}{c}{Wave}\\
    \cmidrule(lr){2-3}\cmidrule(lr){4-5}\cmidrule(lr){6-7}
   Activation & rMAE & rRMSE & rMAE & rRMSE & rMAE & rRMSE\\
   \midrule
   ReLU & 0.4695& 0.4722 & 0.0865 & 0.1583 &0.4139 &0.4203  \\
 Tanh & 0.4531& 0.4601& 0.0299 & 0.0568  & 0.3515 & 0.3539  \\
Wavelet &0.0188 & 0.0201 &0.0094 &0.0217 & 0.0197 &  0.0199\\

    \bottomrule
  \end{tabular}
  \normalsize
  \label{tab:activation}

\end{table}

\section{Complex Problem Results}
\label{apx:comp}

\subsection{2D Navier-Stokes Equations}

Although PINN can already handle Navier-Stokes equations well, we still tested the performance of PINN Mamba on Navier-Stokes equations to check the generalization performance of our method on high-dimensional problems. As shown in Fig.~\ref{fig:nss}, our method achieves good results on Navier-Stokes pressure prediction. Since there is no initial condition information for the N-S equation for pressure, we took the data from the only collocation point for pattern alignment.

\begin{figure*}[t]
    \centering
    \includegraphics[width=\textwidth]{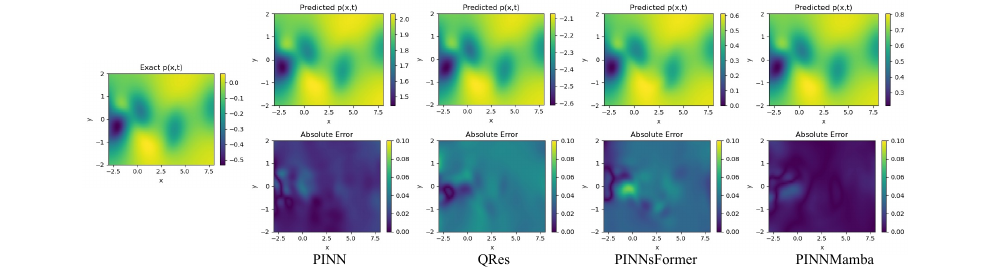}
    \vspace{-3mm}
    \caption{The ground truth solution, prediction (top), and absolute error (bottom) on Navier-Stokes equations.}
    \label{fig:nss}
\end{figure*}

\subsection{PINNacle Benchmark}

Like PINNsFormer, PINNMamba is a sequence model. The sequence model suffers from Out-of-Memory problems when dealing with some of the problems in the PINNacle Benchmark~\cite{hao2023pinnacle}, even when running on the advanced Nvidia H100 GPU. We report here the results of the sub-problems for which results can be obtained in Table~\ref{tab:pinnacle}. PINNMamba can solve the Out-of-Memory problem by distributed training over multiple cards, which we leave as a follow-up work.

\begin{table}[H]
\vspace{-3mm}
  \caption{Results on PINNacle. Baseline results are from RoPINN paper~\cite{wu2024ropinn}. OOM means Out-of-Memroy.}
  
  \centering
    \small
  \begin{tabular}{c|cc|cc|cc}

    \toprule 
      &\multicolumn{2}{c}{PINN }&\multicolumn{2}{c}{PINNsFormer}&\multicolumn{2}{c}{PINNMamba}\\
    \cmidrule(lr){2-3}\cmidrule(lr){4-5}\cmidrule(lr){6-7}
   Equation & rMAE & rRMSE & rMAE & rRMSE & rMAE & rRMSE\\
   \midrule
   Burgers 1d-C &1.1e-2& 3.3e-2 & 9.3e-3 & 1.4e-2 & 3.7e-3 & 1.1e-3 \\
 Burgers 2d-C & 4.5e-1& 5.2e-1&  OOM & OOM & OOM & OOM \\
 Poisson 2d-C & 7.5e-1 & 6.8e-1 & 7.2e-1 & 6.6e-1 & 6.2 e-1 & 5.7e-1 \\
Poisson 2d-CG & 5.4e-1 & 6.6e-1 & 5.4e-1& 6.3e-1 & 1.2e-1 & 1.4e-1 \\
Poisson 3d-CG & 4.2e-1 & 5.0e-1 & OOM& OOM & OOM & OOM \\
Poisson 2d-MS & 7.8e-1 & 6.4e-1 & 1.3e+0& 1.1e+0 & 7.2e-1& 6.0e-1 \\
Heat 2d-VC & 1.2e+0 & 9.8e-1 & OOM& OOM &OOM &OOM  \\
Heat 2d-MS & 4.7e-2 & 6.9e-2 &OOM & OOM &OOM &OOM  \\
Heat 2d-CG & 2.7e-2 & 2.3e-2 & OOM& OOM &OOM &OOM  \\
NS 2d-C & 6.1e-2 & 5.1e-2 & OOM& OOM & OOM& OOM \\
NS 2d-CG & 1.8e-1 & 1.1e-1 & 1.0e-1& 7.0e-2 & 1.1e-2& 7.8e-3  \\
Wave 1d-C & 5.5e-1 & 5.5e-1 & 5.0e-1 & 5.1e-1 & 1.0e-1 & 1.0e-1 \\ 
Wave 2d-CG & 2.3e+0 & 1.6e+0 &OOM & OOM  &OOM &OOM  \\
Chaotic GS & 2.1e-2 & 9.4e-2 & OOM& OOM & OOM &OOM  \\
High-dim PNd & 1.2e-3 & 1.1e-3 &OOM &OOM  &OOM &OOM  \\
High-dim HNd & 1.2e-2 & 5.3e-3 &OOM &OOM  &OOM &OOM  \\

    \bottomrule
  \end{tabular}
  \normalsize
  \label{tab:pinnacle}

\end{table}


\end{document}